\definecolor{Blue}{rgb}{0.9,0.3,0.3}
\newcommand{\squishlist}{
   \begin{list}{$\bullet$}
    { \setlength{\itemsep}{0pt}      \setlength{\parsep}{3pt}
      \setlength{\topsep}{3pt}       \setlength{\partopsep}{0pt}
      \setlength{\leftmargin}{1.5em} \setlength{\labelwidth}{1em}
      \setlength{\labelsep}{0.5em} } }
\newcommand{\squishlisttwo}{
   \begin{list}{$\bullet$}
    { \setlength{\itemsep}{0pt}    \setlength{\parsep}{0pt}
      \setlength{\topsep}{0pt}     \setlength{\partopsep}{0pt}
      \setlength{\leftmargin}{2em} \setlength{\labelwidth}{1.5em}
      \setlength{\labelsep}{0.5em} } }
\newcommand{\squishend}{
    \end{list}  }
\newcommand{\myvec}[1]{\mathbf{#1}}
\newcommand{\vf}{\myvec{f}}
\newcommand{\vk}{\myvec{k}}
\newcommand{\vm}{\myvec{m}}
\newcommand{\vx}{\myvec{x}}
\newcommand{\vy}{\myvec{y}}
\newcommand{\vD}{\myvec{D}}
\newcommand{\vK}{\myvec{K}}
\newcommand{\calB}{{\cal B}}
\newcommand{\calD}{{\cal D}}
\newcommand{\calH}{{\cal H}}
\newcommand{\calN}{{\cal N}}
\newcommand{\calL}{{\cal L}}
\newcommand{\calT}{{\cal T}}
\newcommand{\calX}{{\cal X}}
\newcommand{\calU}{{\cal U}}
\newcommand{\data}{\calD}
\newcommand{\be}{\begin{equation}}
\newcommand{\ee}{\end{equation}}
\newcommand{\bea}{\begin{eqnarray}}
\newcommand{\eea}{\end{eqnarray}}
\newcommand{\beaa}{\begin{eqnarray*}}
\newcommand{\eeaa}{\end{eqnarray*}}
\DeclareMathAlphabet{\mathpzc}{OT1}{pzc}{m}{n}
\newtheorem{mydefinition}{Definition}
\newtheorem{mydefinition1}{Definition}
\newtheorem{mydefinition2}{Definition}
\newtheorem{mydefinition4}{Definition}
\newtheorem{mydefinition5}{Definition}
\newtheorem{lemma}[mydefinition1]{Lemma}
\newtheorem{theorem}[mydefinition2]{Theorem}
\newtheorem{corollary}[mydefinition4]{Corollary}
\newtheorem{assumption}[mydefinition5]{Assumption}
\theoremstyle{remark}
\DeclareMathOperator*{\argmax}{arg\,max}
\DeclareMathOperator*{\Span}{span}
\DeclareMathOperator*{\indicator}{ \mathbf{1} }
\newcommand{\thickhline}{%
    \noalign {\ifnum 0=`}\fi \hrule height 1pt
    \futurelet \reserved@a \@xhline
}
\begin{document}

\twocolumn[

\aistatstitle{Bayesian Multi-Scale Optimistic Optimization}

\aistatsauthor{Ziyu Wang \And  Babak Shakibi \And  Lin Jin \And  Nando de Freitas} 
\aistatsaddress{ University of Oxford \And University of British Columbia \And Rocket Gaming Systems \And University of Oxford}
]

\begin{abstract}

Bayesian optimization is a powerful global optimization technique
for expensive black-box functions.
One of its shortcomings is that it requires auxiliary optimization
of an acquisition function at each iteration.
This auxiliary optimization can be costly and very hard to carry out in practice.
Moreover, it creates serious theoretical concerns,
as most of the convergence results assume that the exact optimum of
the acquisition function can be found.
In this paper, we introduce a new technique for efficient global optimization
that combines Gaussian process confidence bounds and
treed simultaneous optimistic optimization to eliminate the need
for auxiliary optimization of acquisition functions.
The experiments with global optimization benchmarks and a novel application to automatic information extraction demonstrate that the resulting technique is more efficient
than the two approaches from which it draws inspiration.
Unlike most theoretical analyses of Bayesian optimization with Gaussian processes, our finite-time convergence rate proofs do not require exact optimization of an acquisition function. That is, our approach eliminates the unsatisfactory assumption that a difficult, potentially NP-hard, problem has to be solved in order to obtain vanishing regret rates.
\end{abstract}

\section{Introduction}
We consider the problem of approximating the maximizer of a deterministic black-box function
$f: {\cal X} \mapsto \mathbb{R}$. The function $f$ can be evaluated point-wise, but it is assumed to be expensive to evaluate.
More precisely, we assume that we are given a finite budget of $n$ possible function evaluations.

This global optimization problem can be treated within the framework of sequential design.
In this context, by allowing $\vx_t \in {\cal X}$ to depend on previous points
and corresponding function evaluations
$\data_{t-1} = \{(\vx_1,f(\vx_1)), \ldots, (\vx_{t-1},f(\vx_{t-1}))\}$,
the algorithm constructs a sequence $\vx_{1:n} = (\vx_1, \vx_2, \ldots, \vx_n)$
and returns the element $\vx(n)$ of highest possible value.
That is, it returns the value $\vx(n)$ that minimizes the loss:
\[
r_n = \sup_{\vx\in {\cal X}}f(\vx) - f(\vx(n)).
\]
This loss is not the same as the cumulative regret used often in
the online learning literature:
$
R_n = n \sup_{\vx\in {\cal X}}f(\vx) - \sum_{t=1}^{n} f(\vx(t)).
$

\emph{Bayesian optimization} (BO) is a popular sequential design strategy
for global optimization; see \cite{Brochu:2009} for an introductory treatment.
Since the objective function $f$ is unknown, the Bayesian strategy is to treat it as a random function and place a prior over it.
The prior captures our beliefs about the behaviour of the function.
After gathering the function evaluations $\data_{t-1}$, the prior is updated
to form the posterior distribution over $f$.
The posterior distribution, in turn, is used to construct
an \emph{acquisition function} that determines what the next query point $\vx_t$ should be.
Examples of acquisition functions include probability of improvement,
expected improvement, Bayesian expected losses, upper confidence bounds (UCB),
and dynamic portfolios of these
\citep{Mockus:1982,Jones:2001,Garnett:2010,Srinivas:2010,Chen:2012,Hoffman:2011}. If we were to implement Thompson sampling strategies \citep{May:2011,Kaufmann:2012,Agrawal:2013} for Gaussian processes (GPs), we would also encounter the difficult problem of having to find the maximizer of a sample from the GP at each iteration, unless we were considering only a finite set of query points \citep{Hoffman:2014}.

The maximum of the acquisition function is typically found by resorting to
discretisation or by means of an \emph{auxiliary optimizer}.
For example, \cite{Snoek:2012} use discretisation, \cite{Bardenet:2010}
use adaptive grids, \cite{Brochu:2007,martinez-cantin:2007} and \cite{Mahendran:2012}
use the DIRECT algorithm of \cite{Jones:1993}, \cite{Lizotte:2011}
use a combination of random discretisation and quasi-Newton hill-climbing,
\cite{Bergstra:2011} and \cite{Wang:rembo} use the CMA-ES method of \cite{Hansen:2001},
\cite{Hutter:smac} apply multi-start local search. (Approaches within the framework of Bayesian nonlinear experimental design, such as
\citep{Hennig:2012} for finding maxima and \citep{Kueck:2006,Kueck:2009,Hoffman:2009} for learning functions and Markov decision processes, have to rely on expensive approximate inference for computing intractable integrals. An analysis of these approaches is beyond the scope of this paper.)

The auxiliary optimization methodology is problematic for several reasons.
First, it is difficult to assess whether the auxiliary optimizer
has found the maximum of the acquisition function in practice.
This creates important theoretical concerns about the behaviour of BO algorithms
because the typical theoretical convergence guarantees are only valid
on the assumption that the optimum of the acquisition function can be found exactly;
see for example \cite{Srinivas:2010,Vazquez:2011} and
\cite{Bull:2011}.
Second, running an auxiliary optimizer at each iteration of the BO algorithm
can be unnecessarily costly. For any two
consecutive iterations, the acquisition function may not change drastically.
This questions the necessity of re-starting the auxiliary optimization at each iteration.

Recent \emph{optimistic optimization} methods provide a viable alternative
to BO~\citep{Kocsis:2006, Bubeck:2011, Munos:2011}.
Instead of estimating a posterior distribution over the unknown objective function,
these methods build space partitioning trees by expanding leaves
with high function values or upper-bounds.
The term optimistic, in this context, is used to refer to the fact
that the algorithms expand at each round leaves that may contain the optimum.
Remarkably, a variant of these methods,
\emph{Simultaneous Optimistic Optimization} (SOO) by \cite{Munos:2011},
is able to optimize an objective function globally without knowledge of the function's
smoothness. SOO is optimistic at all scales in the sense that it expands several leaves simultaneously, with at most one leaf per level. For this reason, instead of adopting the term ``Simultaneous OO'' we opt for the descriptive term ``Multi-Scale OO''.

We will describe SOO in more detail in Section~\ref{sec:soo}.
We also note that a stochastic variant of SOO
has been recently proposed by \cite{Valko:SSOO},
but we restrict the focus of this paper to the deterministic case.

These optimistic optimization methods
do not require the auxiliary optimization of
acquisition functions. However, due to the lack of a posterior that interpolates
between the sampled points,
it is conceivable that these methods may not be as competitive
as BO in practical domains where prior knowledge is available.
This claim does not seem to have been backed up by empirical evidence in the past.

This paper introduces a new algorithm, BaMSOO,
which combines elements of BO and SOO.
Importantly, it eliminates the need for auxiliary optimization
of the acquisition function in BO.
We derive theoretical guarantees for the method that do not depend
on the assumption that the acquisition function needs to be optimized exactly.
The method uses SOO to optimize the objective function directly,
but eliminates the need for SOO to sample
points that are deemed unfit by Gaussian process posterior bounds.
That is, BaMSOO uses the posterior distribution to reduce
the number of function evaluations in SOO,
thus increasing the efficiency of SOO substantially.

The experiments with benchmarks from the global optimization literature demonstrate that BaMSOO outperforms both GP-UCB and SOO. The paper also introduces a novel application in the domain of knowledge discovery and information extraction. Finally,
our theoretical results show that BaMSOO can attain,
up to log factors, a polynomial finite sample convergence rate.

\section{BO with GP confidence bounds}

Classical BO approaches have two ingredients that
need to be specified: The prior and the acquisition function.
In this work, as in most other works, we adopt Gaussian process (GP) priors.
We review GPs very briefly and refer the
interested reader to the book of \cite{Rasmussen:2006} for an in-depth treatment.
A GP is a distribution over functions specified by its mean
function $m(\cdot)$ and covariance $\kappa(\cdot,\cdot)$.
More specifically, given a set of points $\vx_{1:t}$,
with $\vx_i \in {\cal X} \subseteq \mathbb{R}^D$, we have
$$
\vf(\vx_{1:t}) \sim \mathcal{N}(\vm(\vx_{1:t}), \vK),
$$
where $\vK$, with entries $\vK_{i,j} = \kappa(\vx_i, \vx_j)$,
is the covariance matrix.
A common choice of $\kappa$ in the BO literature is the
anisotropic kernel with a vector of known hyper-parameters
\bea
\kappa(\vx_i, \vx_j) &=& \widetilde{\kappa}\left(-(\vx_i-\vx_j)^T\vD(\vx_i-\vx_j)\right),
\eea
where $\widetilde{\kappa}$ is an isotropic kernel and $\vD$
is a diagonal matrix with positive hyper-parameters
along the diagonal and zeros elsewhere.
Our results apply to squared exponential kernels
and Mat\'ern kernels with parameter $\nu \geq 2$.
In this paper, we assume that the hyper-parameters are fixed and known in advance.
We refer the reader to \cite{martinez-cantin:2007,Brochu:2010,Wang:rembo,Snoek:2012} for different practical approaches to estimate the hyper-parameters.

An advantage of using GPs lies in their analytical tractability.
In particular, given observations $\data_t = \{\vx_{1:t}, \vf_{1:t} \}$,
where $f_i = f(\vx_i)$,  and a new point $\vx_{t+1}$,
the joint distribution is given by:
$$
\begin{bmatrix}\vf_{1:t} \\
f_{t+1} \end{bmatrix} \sim \mathcal{N}\left( \vm(\vx_{1:t+1}),  \begin{bmatrix}
\vK & \vk\\
\vk^{T} & \kappa(\vx_{t+1}, \vx_{t+1})\end{bmatrix}\right)
$$
where $\vk^T = [\kappa(\vx_{t+1},\vx_1) \cdots \kappa(\vx_{t+1},\vx_t)]$.
For simplicity, we assume that $\vm(\cdot) = \mathbf{0}$.
Using the Sherman-Morrison-Woodbury formula,
one can easily arrive at the posterior predictive distribution:
$$
f_{t+1} | \data_t, \vx_{t+1} \sim \mathcal{N}(\mu(\vx_{t+1}|\data_t),
\sigma^2(\vx_{t+1}|\data_t)),
$$
with mean
$\mu(\vx_{t+1}|\data_t) = \vk^{T} \vK^{-1} \vf_{1:t}$
and variance
$
 \sigma^2(\vx_{t+1}|  \data_t) =
\kappa(\vx_{t+1}, \vx_{t+1})-  \vk^{T} \vK^{-1} \vk.$
We can compute the posterior predictive mean $\mu(\cdot)$
and variance $\sigma^2(\cdot)$ exactly for any point $\vx_{t+1}$.

At each iteration of BO, one has to re-compute
the predictive mean and variance.
These two quantities are used to construct the second ingredient of
BO: The acquisition function (or utility function).
In this work, we report results for the GP-UCB acquisition function
$\calU(\vx|\data_t) = \mu(\vx|\data_t) + \sqrt{B_t} \sigma(\vx|\data_t)$,
which is the upper confidence bound (UCB)
on the objective function \citep{Srinivas:2010,deFreitas:2012}.
We also make use of the lower confidence bound (LCB)
which is defined as
$\calL(\vx|\data_t) = \mu(\vx|\data_t) - \sqrt{B_t} \sigma(\vx|\data_t)$.
In these definitions, $B_t$ is such that
$f(\vx)$ is bounded above and below by $\calU(\vx|\data_t)$ and
$\calL(\vx|\data_t)$ with high probability \citep{deFreitas:2012}.

BO selects the next query point by optimizing the
acquisition function $\calU(\vx|\mathcal{D}_t).$
Note that our choice of utility favours the selection of points with high variance
(points in regions not well explored)
and points with high mean value (points worth exploiting).
As mentioned in the introduction, the optimization of the closed-form
acquisition function is often carried out by
off-the-shelf global optimization procedures, such as DIRECT and CMA-ES.

Many other acquisition functions have been proposed, but they often yield similar results; see for example the works of \cite{Mockus:1982} and \cite{Jones:2001}.
The idea of learning portfolios of acquisition functions online was explored by~\cite{Hoffman:2011}.
We do not consider these acquisition functions for brevity.
The BO procedure is summarized in Algorithm~\ref{alg:bo}.

\begin{algorithm}
\caption{GP-UCB}
\label{alg:bo}
\begin{algorithmic}
{ \small
\FOR{$t=1,2,\dots$}
  \STATE $\vx_{t+1} = \argmax_{\vx \in {\cal X}} \calU(\vx|\mathcal{D}_t).$
  \STATE Augment the data $\mathcal{D}_{t+1} =
  \{\mathcal{D}_{t}, (\vx_{t+1}, f(\vx_{t+1})) \}$
\ENDFOR
}
\end{algorithmic}
\end{algorithm}

Finite sample bounds for GP-UCB were derived by~\cite{Srinivas:2010}.
However, the bounds
depend on the algorithm being able to optimize the UCB acquisition function, at each iteration,
exactly. Unless the action set is discrete, it is unlikely that we will be able to find the global optimum of the UCB with a fixed budget optimization method. That is,
we may not be able to guarantee that we can find the exact optimum of the UCB, and hence the theoretical bounds seem to make a very strong assumption in this regard.

\subsection{Shrinking feasible regions}
\label{sec:dFMS}

\begin{figure}[t!]
 \begin{center}
   \includegraphics[scale=0.34]{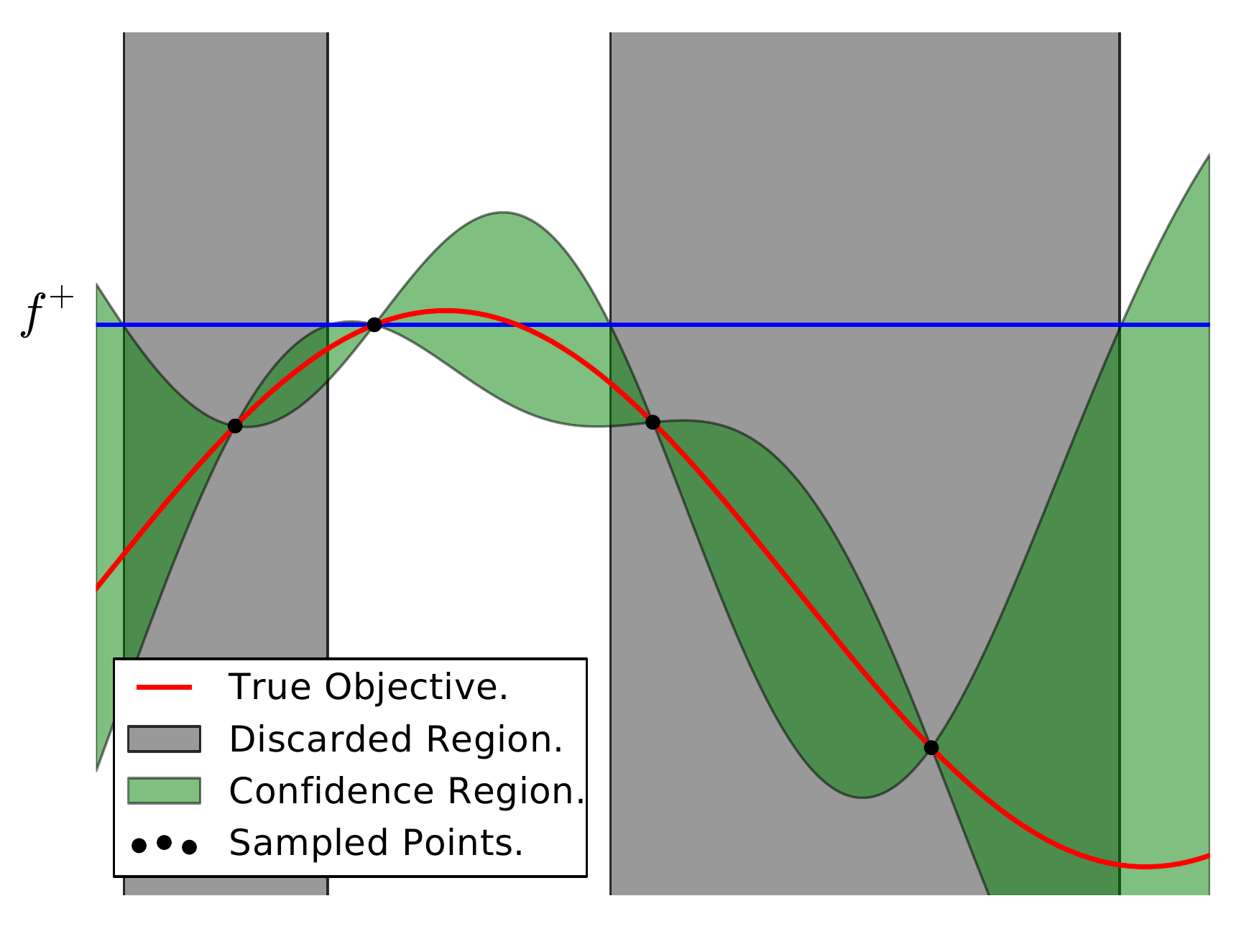}
   \caption{The global shrinking method of \cite{deFreitas:2012}. If the unknown objective function lies within the (green) confidence bounds with high probability, we can discard regions of the space where the upper bound is lower than the best lower bound encountered thus far.}
   \label{feasible}
 \end{center}
\end{figure}

\cite{deFreitas:2012} introduced a different GP-based scheme to
trade off exploration and exploitation. Instead of optimizing the acquisition
function, they proposed to sample the objective function
using a finite lattice within a feasible region $R$. The feasible region at the $t^{th}$ iteration is defined as
$$R_t = \{\vx:\mu_t(\vx) + B_t \sigma_t{(\vx)} > \hspace{-2mm} \sup_{\vx \in R_{t-1}}
\mu_t(\vx) - B_t \sigma_t{(\vx)}\}.$$
That is, one should only search in the region where the upper bound is greater than the best lower bound encountered thus far, as illustrated in~Figure \ref{feasible}. With high probability, the optimizer lies within $R_t$.

\cite{deFreitas:2012} proved that if we double the density of points in the lattice
at each iteration, the feasible region
shrinks very quickly. More precisely, they showed that the simple regret vanishes at an exponential rate and that the cumulative regret is bounded by a constant.

With this approach, they did not have to resort to optimizing an acquisition
function. However, even in moderate dimensions, their algorithm is
impractical since the lattice often becomes too large to be sampled
in a reasonable amount of time.

In this paper, we will argue that to overcome this problem, an optimistic strategy may have to be employed. Such a strategy enables us to
sample the most promising regions first, so as to avoid the computational cost
associated with covering the whole space. In the next section, we begin our discussion of optimistic strategies.

\section{Simultaneous optimistic optimization}
\label{sec:soo}

\begin{algorithm}[t!]
\caption{SOO}
\label{alg:soo}
\begin{algorithmic}
{ \small
\STATE Evaluate $f(\vx_{0, 0})$
\STATE Initialize the tree $\mathcal{T}_1 = \{0, 0\}$
\STATE Set $n=1$
\WHILE{true}
    \STATE Set $\nu_{\max} = -\infty$
    \FOR{$h=0:\min\{\mbox{depth}{(\calT_n}), h_{\max}(n)\}$}
        \STATE Select $(h, j) = \argmax_{j \in \{j| (h, j)\in L_n\}}f(\vx_{h,j})$
        \IF {$f(\vx_{h,j}) > \nu_{\max}$}
            \STATE Evaluate the children of $(h, j)$
            \STATE Add the children of $(h, j)$ to $\calT_n$
            \STATE Set $\nu_{\max} = f(\vx_{h,j})$
            \STATE Set $n= n + 1$
        \ENDIF
    \ENDFOR
\ENDWHILE
}
\end{algorithmic}
\end{algorithm}

Deterministic optimistic optimization (DOO) and simultaneous optimistic optimization (SOO) are tree-based space partitioning methods for black-box function optimization \citep{Munos:2011,Munos:2014}. They were inspired by the UCT algorithm, which enjoyed great success in planning \citep{Kocsis:2006}. UCT was shown to have no finite-time guarantees by \cite{Coquelin:2007}. This prompted the development of a range of optimistic, in the face of uncertainty, approaches. The term optimism, here, refers to the fact that the strategies expand at each round tree cells that may contain the optimum.

DOO and SOO partition the space $\calX$ hierarchically
by building a tree. Let us assume that each node of the tree has $k$ children.
A node $(h,j)$ at level $h$ of the tree has children
$\{ (h+1, kj+i)\}_{0\leq i < k-1}$. The children partition the parent cell
$X_{h, j}$ into cells $\{X_{h+1, kj+i},\; 0\leq i < k-1 \}$. The
root cell is the entire space $\calX$.
A node is always evaluated at the center of the cell, which we denote as $\vx_{h, j}$.

Instead of assuming that
the target function is a sample from a GP, DOO and SOO assume
the existence of a symmetric
semi-metric $\ell$ such that $f(\vx^*) - f(\vx) \leq \ell(\vx, \vx^*)$
where $\vx^*$ is the maximizer of $f$.
Although, SOO assumes that $\ell$ exists, it does not require explicit knowledge
of it.

DOO on the other hand does require knowledge of $\ell$. DOO builds a tree $\calT_n$ incrementally, where $n$ denotes the index over node expansions.
DOO expands a leaf $(h,j)$ from the set of leaves $L_n$ (nodes whose children are not in $\calT_n$)
if it has the the highest upper bound:
$f (\vx_{h,j} ) + \sup_{\vx \in X_{h,j}} \ell (\vx_{h,j} , \vx)$. This value for any cell containing $\vx^*$ upper bounds the best function value $f^*$. The performance of DOO depends crucially on our knowledge of the true local smoothness of $f$. SOO aims to overcome the difficulty of having to know the true local smoothness.

SOO, as summarized in Algorithm~\ref{alg:soo}, expands several leaves simultaneously. When a node is expanded, its children are evaluated. At each round, SOO expands at most one leaf per level, and a leaf is
expanded only if it has the largest value among all
leaves of the same or lower depths.
The SOO algorithm takes as input a function $n \rightarrow h_{\max}(n)$,
which limits the maximum height of the tree after $n$ node expansions.
$h_{\max}(n)$ defines a tradeoff between deep versus broad exploration.
At the end of the finite horizon, SOO returns the $\vx$ with the highest objective function value.
Figure~\ref{fig:tree} illustrates the application of SOO to a simple 1-dimensional optimization problem.


\begin{figure}[t!]
\begin{center}
  \includegraphics[scale=0.36]{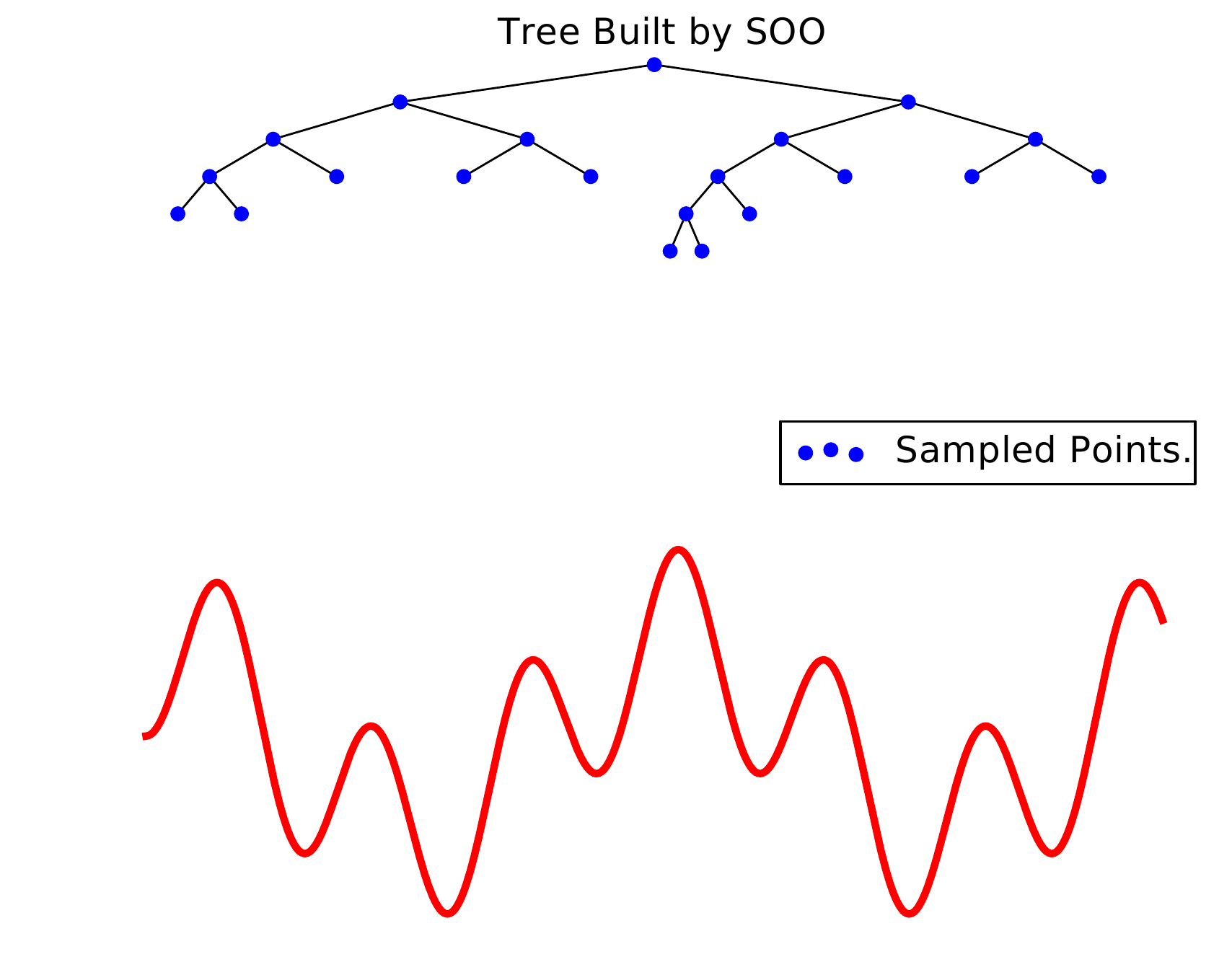}
  \includegraphics[scale=0.36]{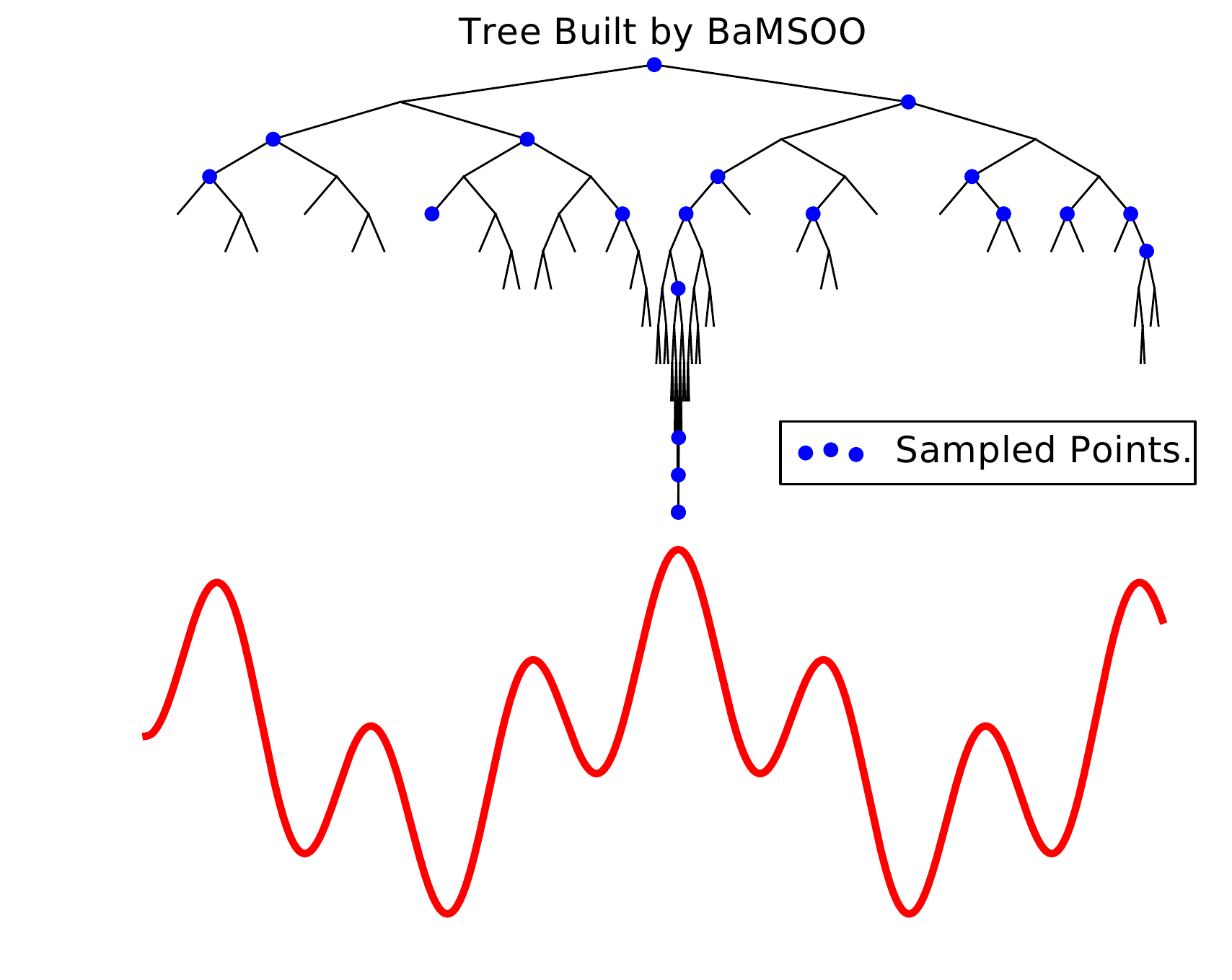}
  \caption{[TOP]: The tree built by SOO when optimizing the function
  $f(x) = \frac{1}{2}\sin(15 x) \sin(27 x)$ in $[0, 1]$.
  [BOTTOM]: The tree built by BaMSOO.
  The $20$ blue dots represent nodes where the objective was evaluated.
   BaMSOO, in comparison, does not
  evaluate the objective function for points known to be sub-optimal with high probability. Hence,
  BaMSOO can achieve a better coverage of the search space with the same number of function evaluations as SOO.}
  \label{fig:tree}
\end{center}
\end{figure}

\section{BaMSOO}
SOO offers a different way of trading off exploration and exploitation 
that does not require the optimization of an acquisition function.
However, it does not
utilize all the information brought in by the previously evaluated
points effectively. To improve upon SOO in practice, we consider the additional assumption
that the objective function is a sample from a GP prior.

We define the LCB and UCB to be
$\calL_N(\vx|\mathcal{D}_{t}) =
 \mu(\vx|\data_t) - B_N \sigma(\vx|\data_t)$
and
$\calU_N(\vx|\mathcal{D}_{t}) =
 \mu(\vx|\data_t) + B_N \sigma(\vx|\data_t)$
where $B_N = \sqrt{2\log(\pi^2N^2/{6\eta})}$ and $\eta \in (0,1)$.

The BaMSOO algorithm is very similar to SOO. As with SOO, we only evaluate the cell at the center point.
However,
when a node's UCB is less than
the function value of the best point already sampled, denoted $f^{+}$,
we do not evaluate the objective function at this node
because with high probability the center point is
 sub-optimal. Instead, we simply assign to this node its LCB value. Note that if the center-point of a cell is sub-optimal, the cell may still contain the optimizer. Hence this cell must also be further expanded in subsequent iterations. To manage these two types of node in the pseudo-code (see Algorithm~\ref{alg:sooucb}), we introduce a place-holder function $g$ which is set to $f$ when the UCB of the cell of interest is bigger than $f^{+}$, and it is set to the LCB of the node otherwise. For clarity, we remind the reader that the indices $N, k, t$ and $n$ are over node evaluations, branches (children), function evaluations and node expansions respectively.

In the pseudocode, we have highlighted in blue the additional lines of code brought in by BaMSOO. Effectively, BamSOO only involves a slight modification of SOO (Algorithm 2) provided we have GP routines to evaluate the LCB and UCB.

We found the assignment of the LCB values to nodes that do worse than $f^{+}$ to work well in practice. For this reason our presentation, experiments and theory focus on this choice.

BaMSOO improves upon SOO by making use of the
available information more efficiently. Moreover, by using an optimistic proposal, it avoids
the need to sample exhaustively before shrinking the feasible region as in \citep{deFreitas:2012}.
Figure~\ref{fig:tree} illustrates how BaMSOO can cover the search space more effectively, even though it incurs the same number of expensive function evaluations as SOO.

\begin{algorithm}[h!]
\caption{BaMSOO}
\label{alg:sooucb}
\begin{algorithmic}[1]
{ \small
\STATE Set $g_{0, 0}=f(\vx_{0, 0})$
\STATE \textcolor{blue}{Set $f^+=g_{0, 0}$}
\STATE Initialize the tree $\mathcal{T}_1 = \{0, 0\}$
\STATE \textcolor{blue}{Set $t=1$,} $n = 1, N=1$\textcolor{blue}{,
 and $\mathcal{D}_{t} = \{(\vx_{0, 0}, g(\vx_{0, 0})) \}$}
\WHILE{true}
    \STATE Set $\nu_{\max} = -\infty$.
    \FOR{$h=0$ to $\min\{\mbox{depth}{(\calT_n}), h_{\max}(n)\}$}
        \STATE Select $(h, j) = \argmax_{j \in \{j| (h, j)\in L_n\}} g(\vx_{h,j})$
        \IF {$g(x_{h,j}) > \nu_{\max}$}
            \FOR{$i = 0$ to $k-1$}
                \STATE Set $N = N + 1$
                \IF {\textcolor{blue}{$\calU_N(\vx_{h+1, kj+i}|\mathcal{D}_{t}) \geq f^{+}$}}
                    \STATE Set $g(\vx_{h+1, kj+i}) = f(\vx_{h+1, kj+i})$
                    \STATE \textcolor{blue}{Set $t = t + 1$}
                    \STATE
                    \textcolor{blue}{$\mathcal{D}_{t} =
                    \{\mathcal{D}_{t-1}, (\vx_{h+1, kj+i}, g(\vx_{h+1, kj+i})) \}$}
                \ELSE
                    \STATE \textcolor{blue}{Set $g(\vx_{h+1, kj+i}) = \calL_N(\vx_{h+1, kj+i}|\mathcal{D}_{t})$}
                \ENDIF

                \IF  {\textcolor{blue}{$g(\vx_{h+1, kj+i}) > f^+$}}
                    \STATE \textcolor{blue}{Set $f^+ = g(\vx_{h+1, kj+i})$}
                \ENDIF
            \ENDFOR

            \STATE Add the children of $(h, j)$ to $\calT_n$
            \STATE Set $\nu_{\max} = g(\vx_{h,j})$
            \STATE Set $n = n + 1$
        \ENDIF
    \ENDFOR
\ENDWHILE
}
\end{algorithmic}
\end{algorithm}

\section{Analysis}

In this section, we provide an overview of the theoretical analysis of BaMSOO, which appears in the Appendix. Our discussion here will focus on our assumptions. At the end of this section, we will present the main result and sketch the proof coarsely.

We denote the global maximum by $f^* = \sup_{\vx \in \calX} f(\vx)$ and
the maximizer by $\vx^* = \argmax_{\vx \in \calX} f(\vx)$.

We make similar assumptions to those made by~\cite{deFreitas:2012}.
 As in their case, we make the global assumption
 that the objective function is a sample from a GP and a local assumption about
 the behavior of the objective near the optimum.

 \begin{assumption}[Conditions on the GP kernel]
 \label{kernelAspn}
 $\calX \subseteq \mathbb{R}^D$ is a compact set, and $\kappa$ is a kernel
 on $\mathbb{R}^D$ that is twice differentiable along the diagonal
 such that
 $\partial_\vx \partial_{\vx'}
      \kappa(\vx,\vx')|_{\vx=\vx'}$ exists.
\end{assumption}

\begin{assumption} [Local smoothness of $f$]
 \label{envelop}
 $f \sim \mbox{GP}(0, \kappa)$ is a continuous sample on $\calX$ that
 has a unique global maximum $\vx^*$, such that
 $f^* - c_1\|\vx - \vx^*\|^{\alpha}_2 \leq f(\vx)$
 $\forall \vx \in \calX$ and
 $f(\vx) \leq f^* - c_2\|\vx - \vx^*\|^2_2$
 $\forall \vx \in \calB(\vx^*, \rho)$ for some constants $c_1$,  $c_2$, $\rho > 0$
 and $\alpha \in \{1, 2\}$.
 Also $f^* - \max_{\vx \in \calX \setminus \calB(\vx^*, \rho)} f(\vx) > \epsilon_0$ for some
 $\epsilon_0 > 0$.
\end{assumption}

 As argued by~\cite{deFreitas:2012}, in many practical cases
 the local conditions follow almost surely from the global condition.
 For example, if we were to employ the Matern kernel with $\nu > 2$ or
 a kernel that is $6$ times differentiable along the diagonal,
 we would have that the samples of the GPs are twice differentiable
 with probability one.
 The first case was shown by~\cite[Theorem 1.4.2]{Adler:2007}
 and \cite[\S2.6]{Stein:1999}, while the second result was shown by~\cite[Theorem 5]{Ghosal:2006}.
 If the $\vx^*$ lies in the interior of $\calX$,
 then the Hessian of $f$ at $\vx^*$ would be almost surely non-singular as
 at least one of the eigenvalues of the Hessian is a co-dimension
 1 condition in the space of all functions
 that are smooth at a given point~\citep{deFreitas:2012}.
 In this case, we would have that
 $$f^* - c_1\|\vx - \vx^*\|^{\alpha}_2 \leq f(\vx)
 \leq f^* - c_2\|\vx - \vx^*\|^2_2$$ with $\alpha = 2$.

If $\vx^*$ lies on the boundary
 of $\calX$ which we assume to be smooth,
 then $\nabla f (\vx^*) \neq 0$
 since the additional event of the vanishing
 of $\nabla f (\vx^*)$ is a co-dimension $d$ phenomenon in the
 space of functions with global maximum at $\vx^*$~\citep{deFreitas:2012}.
 In this case, we would have that
 $$f^* - c_1\|\vx - \vx^*\|^{\alpha}_2 \leq f(\vx)
 \leq f^* - c_2\|\vx - \vx^*\|^2_2$$ with $\alpha = 1$.

 Finally, a sample from a GP on a compact domain has a unique maximum
 with probability one. This is because the space
 of continuous functions on a compact domain that attain
 their global maximum at more than one point have
 co-dimension 1 in the space of
 all continuous functions on that domain~\citep{deFreitas:2012}.

The subsequent assumptions are about the hierarchical partitioning of the search space. They are the same
as Assumptions 3 and 4 in~\cite{Munos:2011}.

\begin{assumption}[Bounded diameters]
 \label{as:bd}
 There exists a decreasing sequence $\delta(h) > 0$, such that for
 any depth $h \geq 0$, for any cell $X_{h,i}$ of depth $h$, we have
 $\sup_{\vx\in X_{h,i}} \ell(\vx_{h,i} , \vx) \leq \delta(h).$
 Here $\ell(\vx, \vy) := c_1 \|\vx- \vy\|^{\alpha}_2$
 where $\alpha \in \{1, 2\}$ and
 $\delta(h) = c\gamma^{h}$ for some constant $c >0$ and $\gamma \in (0, 1)$.
\end{assumption}

\begin{assumption}[Well-shaped cells]
 \label{as:cell}
 There exists $\nu > 0$
 such that for any depth $h \geq 0$, any cell $X_{h,i}$
 contains an $\ell$-ball of radius $\nu\delta(h)$ centered in $X_{h,i}$.
\end{assumption}

Note that depending on the value of $\alpha$, $\gamma$ would have to take on a different
value for Assumptions~\ref{as:bd} and~\ref{as:cell} to be
satisfied. Regardless of the choice of $\alpha$ and as illustrated in Example 1 of~\cite{Bubeck:2011},
Assumptions~\ref{as:bd}
and~\ref{as:cell} are easy to satisfy in practice; for example when $\vx \in [0,1]^D$ and the split is done along the largest dimension of a cell. This is the case in all our experiments.

 Assumption~\ref{envelop} together with Assumptions~\ref{as:bd} and~\ref{as:cell}
 impose a ``\emph{near optimality}'' condition
 as defined by~\cite{Munos:2011}.

We can now present our main result, which is in the form of a corollary to Theorem 1 in the Appendix.

\begin{corollary}
\label{simpleReg}
 Let $d = -(D/4-D/\alpha)$ and $h_{\max}(n) = n^{\epsilon}$.
 Given Assumptions $1-4$, we have that with probability at least $1-\eta$,
 the loss of BaMSOO is $\mathcal{O}\left(n^{-\frac{1-\epsilon}{d}}
 \log^{\frac{\alpha}{4-\alpha}}(n^2/\eta) \right)$.
\end{corollary}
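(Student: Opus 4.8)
The plan is to condition on the event that all of the algorithm's Gaussian-process confidence intervals hold, and then to transplant the depth-versus-regret bookkeeping of SOO \citep{Munos:2011} onto BaMSOO, the only new ingredient being a count of how many cells the confidence bounds still force us to explore. Concretely I would prove the corollary as a specialization of the appendix's Theorem~1: first fix the good event, reduce the regret to a depth, read off the near-optimality dimension of the GP-augmented problem, and finally substitute $h_{\max}(n)=n^{\epsilon}$ and $\delta(h)=c\gamma^{h}$. The good event $\mathcal{E}$ is standard: at the $N$-th node evaluation the posterior at that center is Gaussian, so $\Pr\big(|f-\mu|>B_N\sigma\big)\le e^{-B_N^{2}/2}=6\eta/(\pi^{2}N^{2})$ for the stated $B_N=\sqrt{2\log(\pi^{2}N^{2}/6\eta)}$; a union bound over $N=1,2,\dots$ with $\sum_N N^{-2}=\pi^{2}/6$ gives $\Pr(\mathcal{E})\ge 1-\eta$, and on $\mathcal{E}$ we have $\calL_N(\vx\,|\,\mathcal{D}_t)\le f(\vx)\le \calU_N(\vx\,|\,\mathcal{D}_t)$ at every evaluated center simultaneously. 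This supplies the ``with probability at least $1-\eta$'' in the claim, and everything below is argued on $\mathcal{E}$.

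Next I would show the pruning is harmless and reduce the regret to a depth. Whenever BaMSOO sets $g=\calL_N$ instead of evaluating, $f(\vx)\le\calU_N(\vx)<f^{+}$ on $\mathcal{E}$, so the skipped center is genuinely no better than the incumbent and the returned point is unaffected. Let $(h,i_h^{*})$ be the depth-$h$ cell containing $\vx^{*}$; Assumptions~\ref{envelop}--\ref{as:bd} give $f(\vx_{h,i_h^{*}})\ge f^{*}-\delta(h)$. Either this cell is eventually expanded, deepening the optimal branch, or at some round its UCB dropped below $f^{+}$ — but then $f^{+}>\calU_N(\vx_{h,i_h^{*}})\ge f(\vx_{h,i_h^{*}})\ge f^{*}-\delta(h)$, so the simple regret is already below $\delta(h)$. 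Either way $r_n\le\delta(h(n))$, where $h(n)$ is the depth of the deepest expanded cell on the optimal branch after $n$ node expansions.

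The crux is to lower-bound $n$ in terms of $h(n)$ by counting the cells expanded at depth $h$ before $(h,i_h^{*})$ is. A competitor $(h,i)$ must satisfy $g(\vx_{h,i})\ge f^{*}-\delta(h)$ and, to have been evaluated at all, $\calU_N(\vx_{h,i})\ge f^{+}$; since $\calU_N=\mu+B_N\sigma\le f+2B_N\sigma$ on $\mathcal{E}$, this forces $f(\vx_{h,i})\ge f^{*}-O(\delta(h)+B_N\sigma)$. Here I would use Assumption~\ref{kernelAspn}: twice differentiability of $\kappa$ along the diagonal lets one bound the posterior standard deviation uniformly over a depth-$h$ cell by $\sigma=O(\sqrt{\delta(h)})$, the square root of its $\ell$-diameter, so the threshold is $O(B_N\sqrt{\delta(h)})$. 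The local upper bound $f(\vx)\le f^{*}-c_2\|\vx-\vx^{*}\|_2^{2}$ then confines every competitor center to a Euclidean ball of radius $R=O(B_N^{1/2}\delta(h)^{1/4})$ about $\vx^{*}$; packing this ball against the depth-$h$ cells — each of which, by Assumption~\ref{as:cell}, occupies Euclidean volume $\Omega(\delta(h)^{D/\alpha})$ — leaves at most $O\big(B_N^{D/2}\delta(h)^{-d}\big)$ competitors, with $d=D(1/\alpha-1/4)$ exactly as in the statement (the extra $1/4$, versus the pure-SOO near-optimality dimension $D(1/\alpha-1/2)$, being the price of resolving the optimum only down to the confidence width). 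Finally, each pass of the while loop performs at most $h_{\max}(n)+1=n^{\epsilon}+1$ expansions, so $n$ expansions comprise $\gtrsim n^{1-\epsilon}$ passes; the optimal branch advances a level per pass unless overtaken by a competitor, and the cumulative competitor count is geometric in $h$, so balancing $n^{1-\epsilon}\asymp B_N^{D/2}\delta(h(n))^{-d}$ gives $\delta(h(n))=O\big(n^{-(1-\epsilon)/d}B_N^{D/(2d)}\big)$. Substituting $B_N=\Theta(\sqrt{\log(n^{2}/\eta)})$ and $D/(4d)=\alpha/(4-\alpha)$ into $r_n\le\delta(h(n))$ produces the claimed rate.

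The step I expect to be the main obstacle is precisely the geometric one inside the third paragraph: turning the mere twice-differentiability of $\kappa$ (Assumption~\ref{kernelAspn}) into a clean, uniform bound $\sigma(\vx)=O(\sqrt{\delta(h)})$ valid over an \emph{entire} depth-$h$ cell rather than only at its center, and then carrying out the packing count carefully enough that the confidence width contributes exactly one extra quarter-power of $\delta(h)$ to the radius. This is what fixes the effective near-optimality dimension at $D(1/\alpha-1/4)$ and, through $B_N$, pins down the logarithmic exponent $\alpha/(4-\alpha)$; by contrast the union bound of the first paragraph and the SOO pass-counting of the third are routine once this coupling between the posterior variance and the cell geometry is established.
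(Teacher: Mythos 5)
Your high-level skeleton (condition on the uniform confidence event, reduce the regret to the depth of the deepest expanded optimal node, count near-optimal cells, balance against $h_{\max}(n)=n^{\epsilon}$) is the same as the paper's, and your final bookkeeping --- threshold $\propto B_N\sqrt{\delta(h)}$, ball radius $\propto B_N^{1/2}\delta(h)^{1/4}$, effective dimension $d=D(1/\alpha-1/4)$, and $B_N^{D/(2d)}=\Theta\bigl(\log^{\alpha/(4-\alpha)}(n^2/\eta)\bigr)$ --- reproduces the paper's numbers exactly. But the step you yourself flag as the obstacle is a genuine gap, and the fix you envision is not available. Assumption~\ref{kernelAspn} alone cannot yield a uniform bound $\sigma=O(\sqrt{\delta(h)})$ over a depth-$h$ cell: the posterior standard deviation at a point is of order $\sqrt{\kappa(\vx,\vx)}=O(1)$ unless some \emph{already-sampled} point lies nearby. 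The correct statement (Lemma~\ref{lem:varbound}) is $\sigma_T(\vy)\leq L\|\vx-\vy\|$ where $\vx$ is a point at which $f$ was actually evaluated; a competitor cell whose center (and whose neighbors' centers) were only ever assigned LCB values has no such nearby sample, and indeed at the moment the test $\calU_N(\vx_{h,i}|\data_t)\geq f^+$ is performed, that center is by definition not yet sampled. So the variance factor you insert into the competitors' evaluation condition cannot be justified this way.

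Moreover, the degradation is misplaced in your counting. Your competitor condition $g(\vx_{h,i})\geq f^*-\delta(h)$ presupposes that the depth-$h$ optimal node has $g$-value at least $f^*-\delta(h)$; that holds when it was evaluated ($g=f$), but in exactly the troublesome case --- the optimal node pruned, $g=\calL$ --- its $g$-value could a priori be arbitrarily low (large posterior variance), nothing then forces expanded competitors to be near-optimal, and the induction that advances the optimal branch stalls. Your paragraph-two fallback (``regret already $\leq\delta(h)$'') does not rescue this: if the depth-$h$ optimal node is pruned at small $h$, the bound $\delta(h)$ is far weaker than the target $\delta(h(n))$, and you must still show the optimal branch keeps deepening afterwards. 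The missing idea is the paper's Lemma~\ref{3dopt}: on the good event, if the pruned optimal node $\vx^*_h$ lies in $\calB(\vx^*,\rho)$, then $f(\vx^+)\geq\calU(\vx^*_h)\geq f^*-\delta(h)$, so the quadratic upper envelope of Assumption~\ref{envelop} squeezes \emph{both} $\vx^+$ and $\vx^*_h$ into a ball of radius $\sqrt{\delta(h)/c_2}$ around $\vx^*$; since $\vx^+$ \emph{is} a sampled point, Lemma~\ref{lem:varbound} gives $\sigma(\vx^*_h)\leq 2L\sqrt{\delta(h)/c_2}$, whence $\calL(\vx^*_h)\geq f^*-\bar{c}B_{N(\vx^*_h)}\gamma^{h/2}$. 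It is this lower bound on the pruned optimal node's own $g$-value --- not a variance bound over competitor cells --- that makes every competitor expanded before it $\bigl(\bar{c}B_N\gamma^{h/2}\bigr)$-optimal, and hence is the true source of $d=D/\alpha-D/4$ and of the $\log^{\alpha/(4-\alpha)}$ factor. With that lemma in place, your counting and balancing go through essentially as in Lemmas~\ref{bbelow}--\ref{lem:induction} and Theorem~\ref{keytheorem}.
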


It is worth pointing out that the result presented in Corollary~\ref{simpleReg}
is based on the number of node expansions $n$ instead of the number of function evaluations. The theory can therefore be strengthened.

If $\alpha=2$ and $\epsilon=1/2$, then the above result translates to
$\mathcal{O}\left(n^{-\frac{2}{D}}\log(n^2/\eta) \right)$.
If $\alpha=1$ with $\epsilon$ being the same as before, then
the rate of convergence becomes
$\mathcal{O}\left(n^{-\frac{2}{3D}}\log^{\frac{1}{3}}(n^2/\eta) \right)$.

 The structure of the proof follows that in~\cite{Munos:2011}.
 Let $\vx^*_h$ denote the optimal node at level $h$ (that is, the node at height $h$ in the branch that contains the optimum $\vx^*$).
 Our proof shows that once $\vx^*_h$ is expanded, it does not
 take long for $\vx^*_{h+1}$ to be expanded.
 Once an optimal node $\vx^*_h$ is expanded,
 by Assumptions~\ref{envelop} and~\ref{as:bd}, we have that
 the loss of BaMSOO is no worse than $\delta(h)$ .

 The main difficulty of the proof lies in the fact that we sometimes do not sample
 nodes when their UCB values are less than the best observed value.
 In this case, we can no longer make the claim that an optimal node
 is expanded soon after its parent.
 This is because when
 a node is not expanded, its LCB can be very low due to a high standard deviation.
 Fortunately, we can show that this is not the case for optimal nodes
 in the optimal region.
 This is accomplished by showing that the standard deviation at
 a point is no more than its distance to the nearest sampled point
 up to a constant factor (shown in Lemma~\ref{lem:varbound}).
 This enables us to show that every optimal node in the optimal region
 must have a low standard deviation.
 Given this result, we can adopt the proof structure outlined in
 \cite{Munos:2011}.

\section{Experiments with global optimization benchmarks}

\begin{figure*}[t!]
\begin{center}
  \includegraphics[scale=0.3]{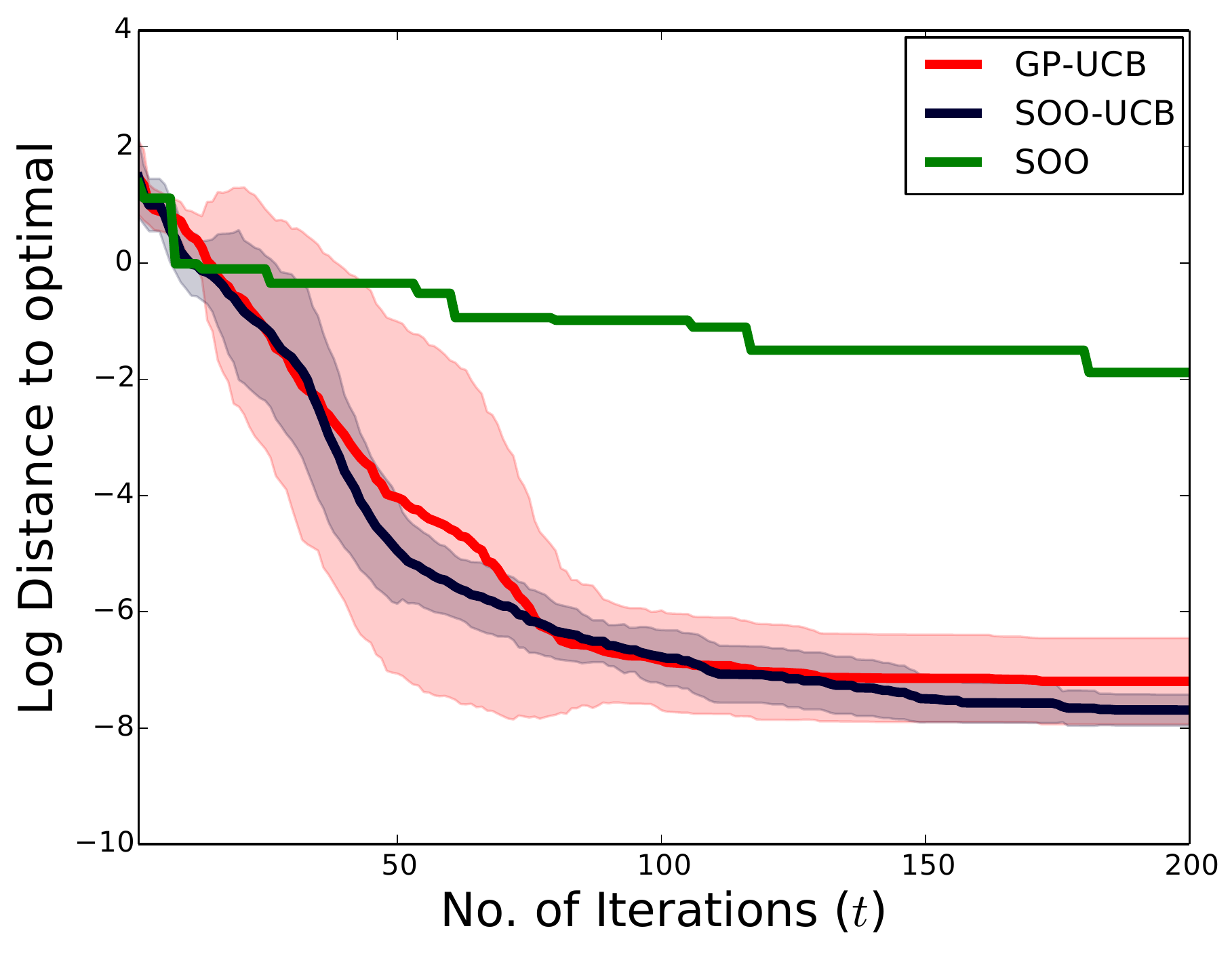}
  \includegraphics[scale=0.3]{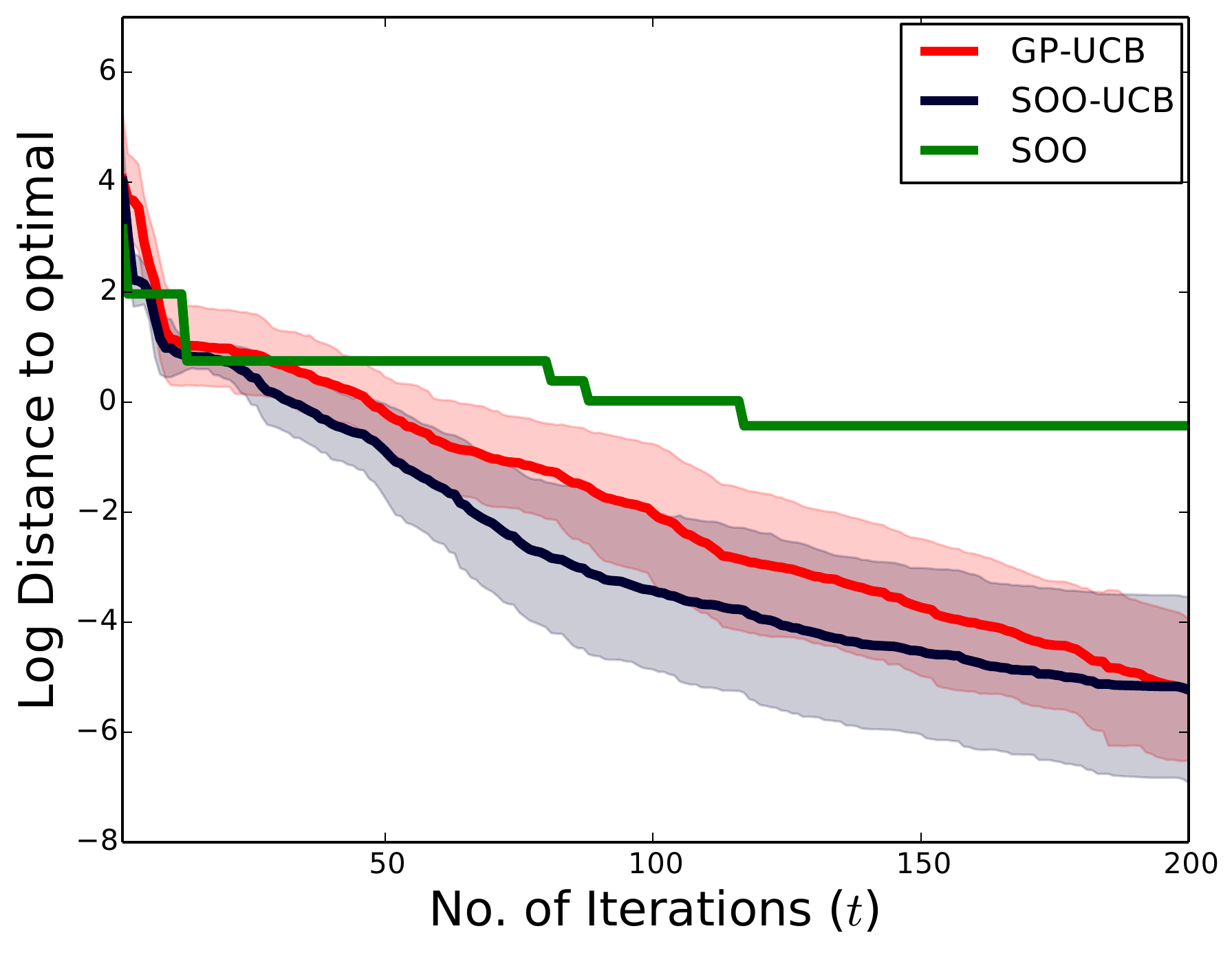}
  \includegraphics[scale=0.3]{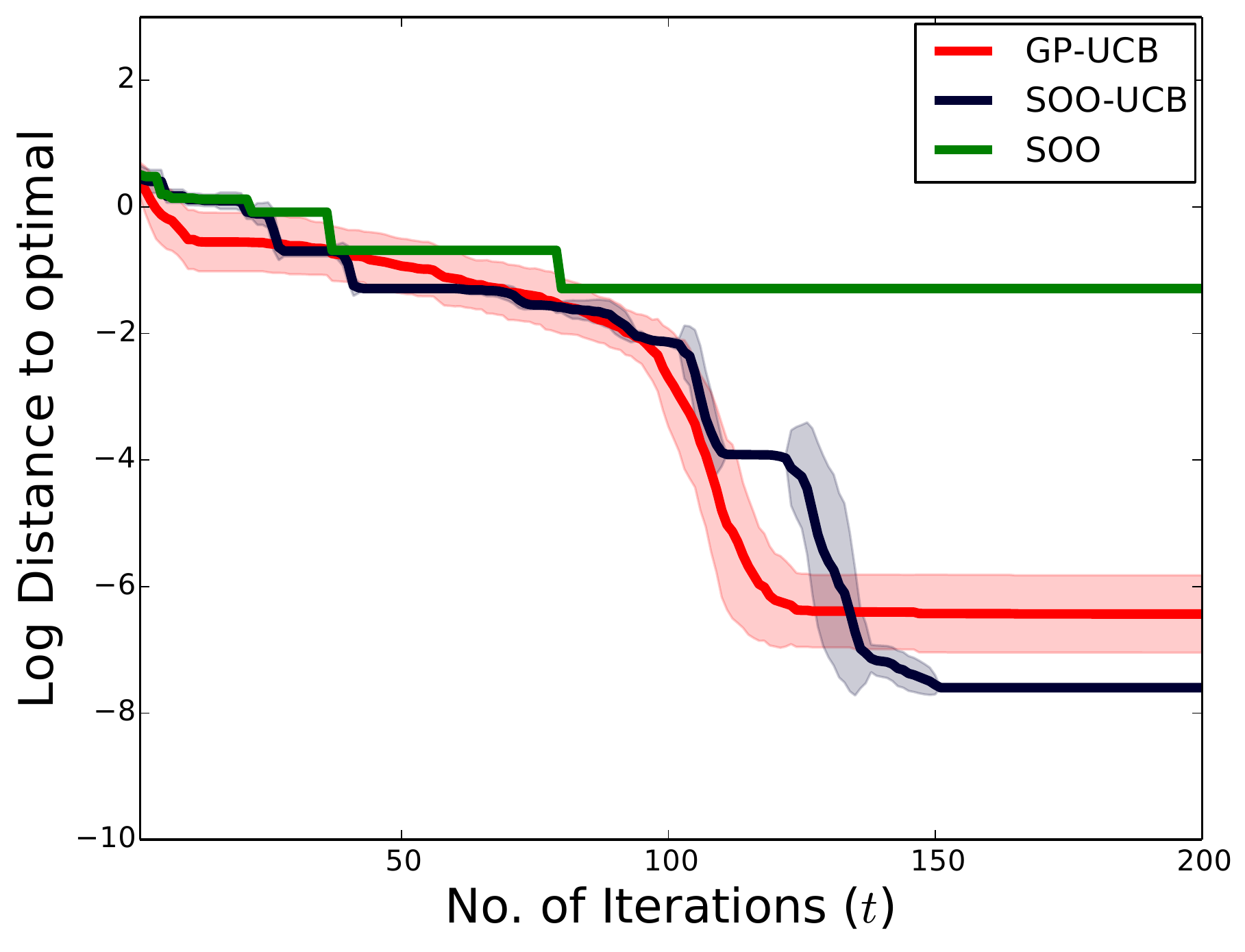}
  \caption{\label{fig:lo}Comparison of GP-UCB, SOO, and BaMSOO on multi-modal test functions
  of low dimensionality (Branin, Rosenbrock and Hartmann3D). GP-UCB and BaMSOO
  perform similarly whereas SOO does poorly. The poor performance of SOO is caused
  by having weaker assumptions on the smoothness of the objective function.
  The good performance of GP-UCB indicates that when the dimensionality is low
  optimizing the acquisition function is reasonable.}
\end{center}
\end{figure*}

In this section, we validate the proposed algorithm with a series of experiments
that compare the three algorithms (GP-UCB, SOO, BaMSOO) on global optimization benchmarks.
We have omitted the feasible region shrinking algorithm
(described in Section~\ref{sec:dFMS}) as it is not practical
for problems of even moderate dimensions. We have also omitted comparisons to PI and EI as these appear in \cite{Hoffman:2011} for the optimization benchmarks described in this paper.

In our experiments, we used the same hyper-parameters in
 GP-UCB and BaMSOO for each test function. We also randomized
the initial sample point for BaMSOO and GP-UCB so that they are not deterministic.
To optimize the acquisition function for GP-UCB, we used DIRECT followed by
a local optimization method using gradients.

We use 5 test functions: Branin, Rosenbrock, Hartmann3, Hartmann6,
and Shekel.
All of these test functions are common in the global optimization literature and with
the exception of the Rosenbrock, they are all multi-modal.
\footnote{Detailed information about the test functions is available at the following website:
{\tiny \url{http://www-optima.amp.i.kyoto-u.ac.jp/member/student/hedar/Hedar_files/TestGO_files/Page364.htm}.}}

We rescaled the domain of each function to the $[0, 1]^D$ hypercube, and
we used the log distance to the true optimum as our evaluation metric. This metric is
defined as $\log_{10}(f^* - f^+)$ where $f^+$ is the best objective value
sampled so far and $f^*$ is the true maximum value of the objective.
For each test function, we repeat our experiments $50$ times for GP-UCB
and BaMSOO and run SOO once as SOO is a deterministic strategy. We plot the
mean and a confidence bound of one standard deviation of
our metric across all the runs for all the tests.

\begin{figure}[b!]
\begin{center}
  \includegraphics[scale=0.3]{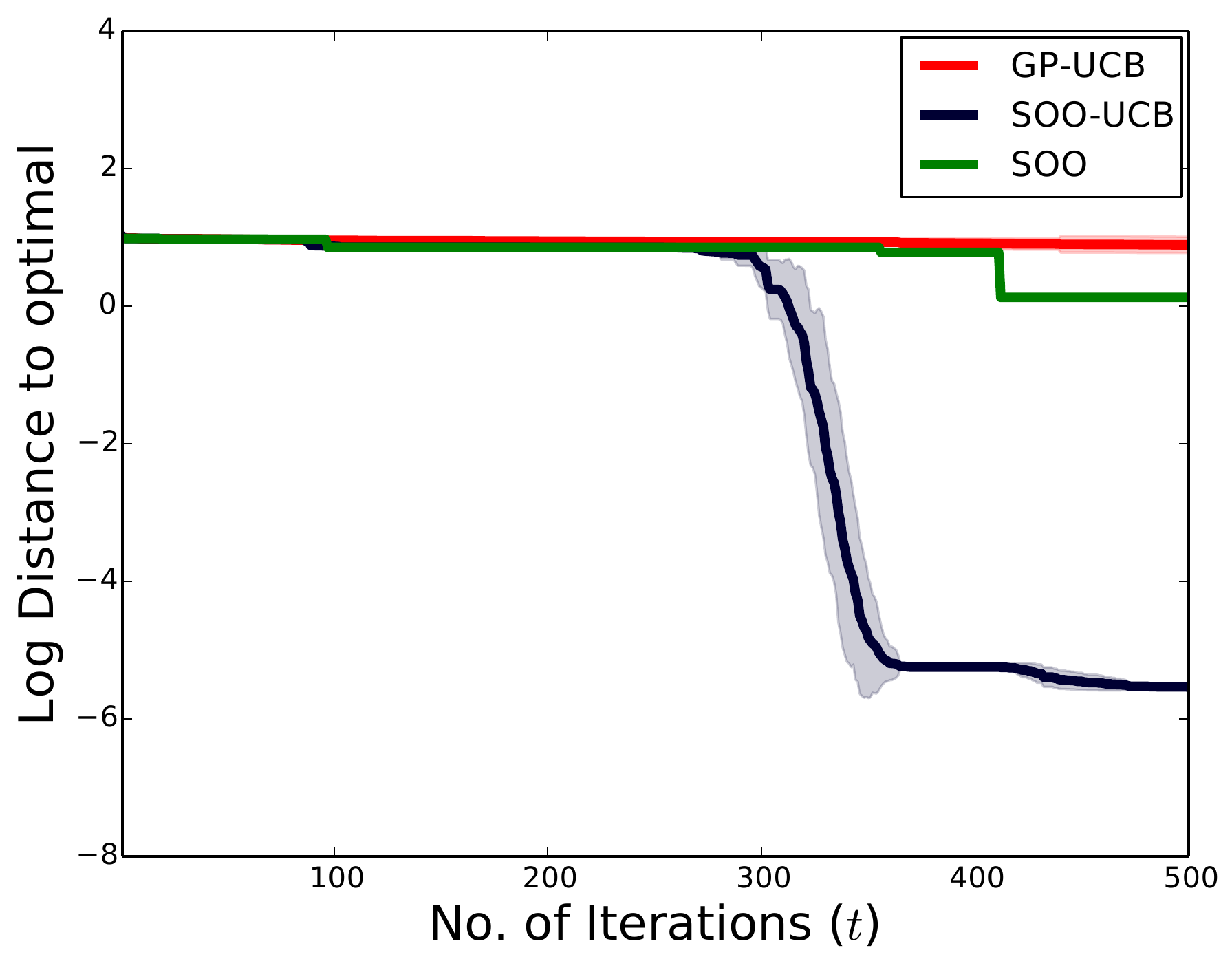}
  \includegraphics[scale=0.3]{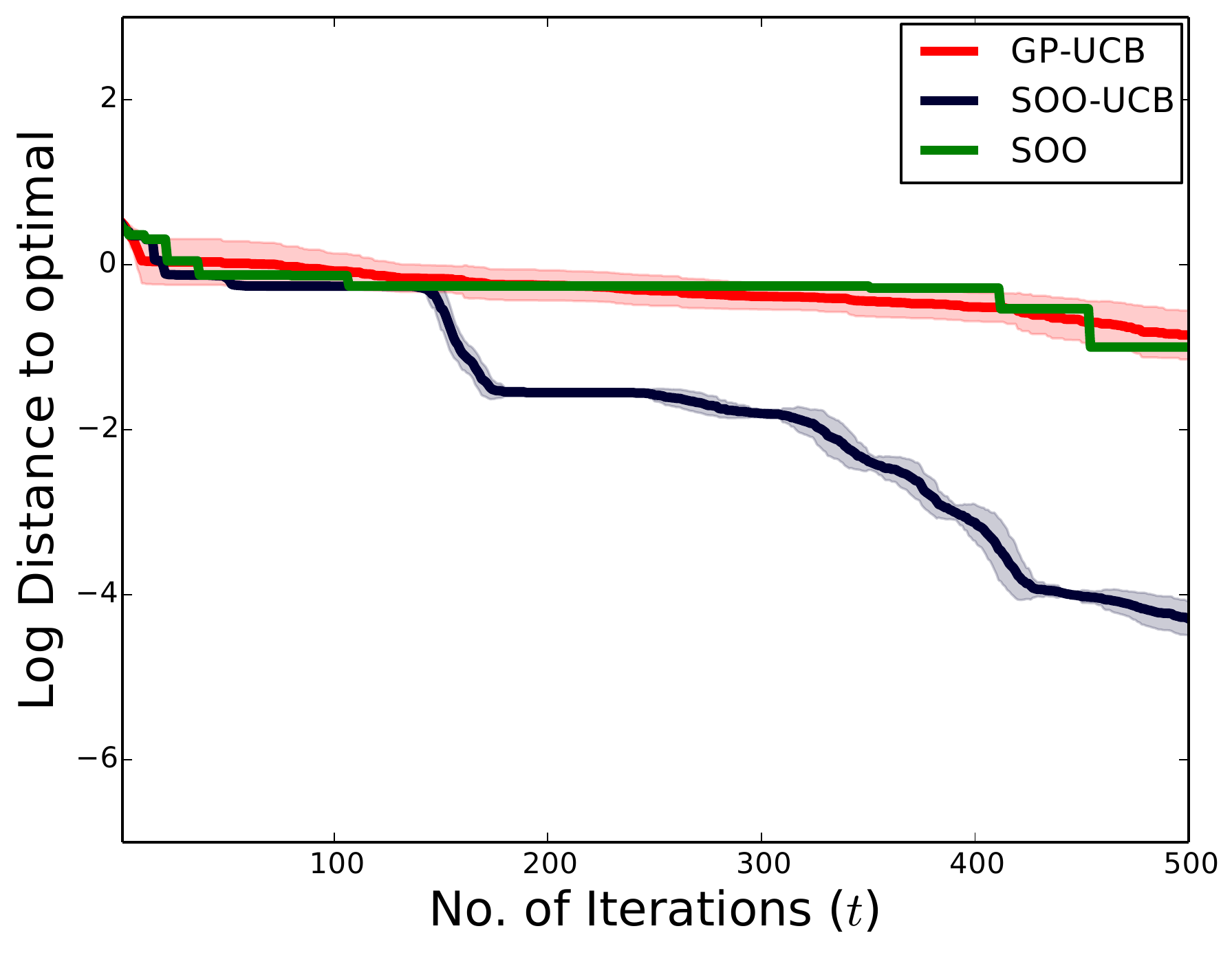}
  \caption{\label{fig:higherd}Comparison of GP-UCB, SOO, and BaMSOO on multi-modal test functions
  of moderate dimensionality: 4D Shekel function (top) and 6D Hartmann function (bottom).
  Here, GP-UCB performs poorly.
  This is due in part to the hardness of optimizing the acquisition function.}
\end{center}
\end{figure}

\begin{table*}
\label{tbl:time}
\caption{Time required for the test functions measured in seconds.
SOO is very fast as it does not
maintain a GP. BaMSOO maintains a GP to produce
more accurate posterior estimates and is hence slower.
The rejection of proposals also results in bigger trees,
further slowing down the algorithm.
GP-UCB is slow compared to the other two algorithms as it
not only maintains a GP but also
optimizes its acquisition function at each iteration.
}
 \begin{center}
 {\small
  \begin{tabular}{ l || r  r  r  r  r }
    \thickhline
    Algorithm  & Branin &  Rosenbrock &  Hartmann3 & Hartmann6 & Shekel \\
    \thickhline
    GP-UCB  & 29.9438 &  29.5716 &  34.0311 & 115.2402 & 100.7770 \\
    BaMSOO & 3.0680  &  3.4693  &  3.9722  &  2.0918  &  3.8951 \\
    SOO & 0.1810 &  0.1835 & 0.1871 & 0.4313 &  0.4350 \\
    \thickhline
  \end{tabular}
  }
\end{center}
\end{table*}

For simplicity, we only consider binary-trees for space partitioning
in SOO and BaMSOO. Specifically, the largest dimension in the parent's cell is split to create two children.

First, we test the global optimization schemes on $3$ test
functions with low dimensionality: Branin, Rosenbrock and Hartmann3.
The Branin function~\citep{Jones:2001} is
a common benchmark for Bayesian optimization
and has 2 dimensions. The Rosenbrock function
is a commonly used non-convex test function for local optimization algorithms,
and although it is unimodal, its optimum lies in a long narrow valley, which makes
the function hard to optimize.
Finally, the Hartmann3 function is $3$-dimensional and has four local optima.

As we can see from Figure~\ref{fig:lo}, BaMSOO performs competitively against
GP-UCB on these low dimensional test functions.
Both BaMSOO and GP-UCB achieve very high accuracies of up to $10^{-8}$
in terms of the distance to the optimal objective value.
In comparison, SOO, due to the lack of a strong prior assumption, cannot
take advantage of the points sampled and thus is lagging behind. 

In the experiments shown in Figure~\ref{fig:higherd}, we compare the approaches in
consideration on the Shekel function and the Hartmann6 function.
The Shekel function is 4-dimensional and has 10 local optima.
The Hartmann6 function is 6-dimensional, as the name suggests, and has 6 local
optima.
On these higher dimensional problems, the
performance of GP-UCB begins to dwindle.
Despite the increase in dimensionality,
BaMSOO is still able to optimize the test functions to a relatively
high precision. SOO does not perform as well as BaMSOO again because
of its weak assumptions.
The poor performance of GP-UCB on these two test functions may be due
in part to the inability of a global optimizer to optimize the acquisition
function exactly in each iteration. As the dimensionality increases, so is
the difficulty of optimizing a non-convex function globally as the cost
of covering the space grows exponentially. The optimization of the acquisition function through algorithms like
DIRECT demands the repartitioning of the space in each iteration.
To reach a finer
granularity, we either have to sacrifice speed by building
very fine partitions in each iteration or
accuracy by using coarser partitions.

The proposed approach is not only
competitive with GP-UCB in terms of effectiveness,
it is also more computationally efficient. As we can see in Table~1,
BaMSOO is about 10-40 times faster than GP-UCB on the test functions that we have
experimented with. This is because instead of optimizing the acquisition function
in each iteration the SOO algorithm, that sits inside, only optimizes once.
BaMSOO, however, is much slower than SOO. This is because BaMSOO also employs
a GP to reject points proposed by SOO. To sample one point, SOO may have
to propose many points before one is accepted. For this reason, BaMSOO would build
much bigger trees compared to SOO and it is therefore slower.

\section{Application to term extraction}
In this section, we evaluate the performance of the BaMSOO algorithm on optimizing the parameters in a term extraction algorithm. Term extraction is the process of analyzing a text corpus to find terms, where terms correspond to cohesive sequences of words describing entities of interest. Term extraction tools are widely used in industrial text mining and play a fundamental role in the construction of knowledge graphs and semantic search products. Recently \cite{parameswaran2010towards} proposed a term extraction method, and showed that it outperforms state-of-the-art competitors, but their method has many free parameters that require manual adjustment. Here, we compare the performance of BaMSOO, GP-UCB and SOO in automatically tuning the 4 primary free parameters of the algorithm (support-thresholds).
We define our deterministic objective function to be the F-score of the extracted terms, which is a weighted average of precision and recall. Precision is calculated using a predefined set of correct terms and recall is estimated by simply normalizing the number of extracted correct terms to be in the range [0,1]. We run the experiment on the GENIA corpus \citep{genia}, which is a collection of 2000 abstracts from biomedical articles. The results of the experiment are shown in Figure~\ref{fig:termextraction}. It is evident from this figure that BaMSOO outperforms GP-UCB and SOO in this application.

\begin{figure}[t!]
\begin{center}
  \includegraphics[scale=0.36]{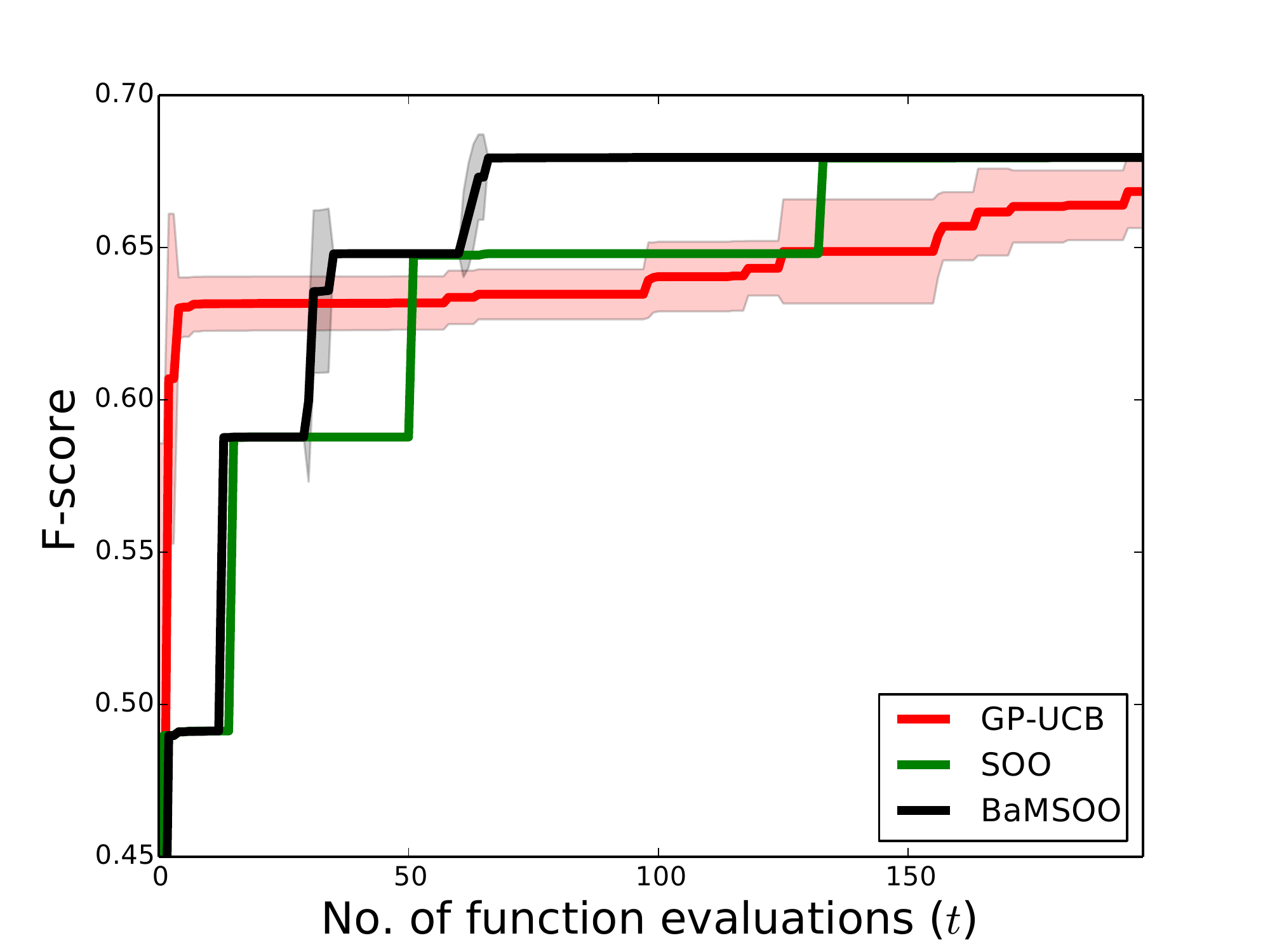}
  \caption{\label{fig:termextraction}Comparison of GP-UCB, SOO, and BaMSOO on optimizing 4 parameters in term extraction from the GENIA corpus using a term extraction algorithm by \citep{parameswaran2010towards}. In this plot, higher is better.}
\end{center}
\end{figure}

\section{Discussion}
 This paper introduced a new global optimization algorithm BaMSOO, which does not require the auxiliary optimization of either acquisition functions or samples from the GP. In trials with benchmark functions from the global optimization literature, the new algorithm outperforms standard BO with GPs and SOO, while being computationally efficient.
The paper also provided a theoretical analysis proving that
the loss of BaMSOO decreases polynomially.


The careful reader may have noticed that, despite the effectiveness of
BaMSOO in the experiments, the convergence rate of BaMSOO is
not as good as that of SOO for $\alpha=2$.
This is because we were only able to prove
that the standard deviation
at a point decreases linearly, instead of quadratically, when a nearby
point is sampled (Lemma~\ref{lem:varbound} in the Appendix).
Since by assumption the objective function behaves quadratically
in the optimal region, the linear decrease of the standard deviation
gives rise to a sub-optimal convergence rate.
It is also interesting to note that the same type of bound
on the standard deviation was used by~\cite{Bull:2011}, who
achieved similar convergence rates to the ones in this paper.
\cite{deFreitas:2012} showed that if the samples
form a $\delta$-cover on a subset of $\calD \subseteq \calX$,
then the standard deviation of all points on $\calD$ is bounded by a quadratic term
$\frac{Q}{4}\delta^2$. Via this observation, the authors achieved a
geometric convergence rate.
The requirement of the $\delta$-cover, however, renders their
algorithm impractical.
Finding a \emph{practical} GP-based algorithm that achieves geometric convergence
rates remains an open problem.

%

\section*{Acknowledgements}
We would like to thank Remi Munos for many valuable discussions.
We also thank NSERC and the University of Oxford for financial support.

{
\bibliography{bayesopt}
\bibliographystyle{icml2014}
}

\appendix
\newpage
\onecolumn
\section{Proofs}

We begin by introducing some notation.
Let $\vx^*_h$ denote the optimal node at level $h$.
That is the cell of $\vx^*_h$ contains the optimizer $\vx^*$.
Also let $f^+$ and $\vx^+$ represent the best function value observed thus far and
the associated node respectively.

\subsection{Technical Lemmas}
\begin{lemma}[Lemma 5 of~\cite{deFreitas:2012}]
  \label{lem:rkhs}
  Given a set of points $\vx_{1:T} := \{\vx_1, \ldots, \vx_T\} \in \mathcal D$ and a
  Reproducing Kernel Hilbert Space (RKHS) $\calH$ with kernel $\kappa$ the following
  bounds hold:
  \begin{enumerate}
  \item \label{item:lipschitz}
    Any $f \in \calH$ is Lipschitz continuous with constant
    $\|f\|_\calH L$, where $\|\cdot\|_\calH$ is the Hilbert space norm
    and $L$ satisfies the following:
    \begin{align*}
      L^2 \leq \sup_{\vx \in \mathcal D} \partial_\vx \partial_{\vx'}
      \kappa(x,x')|_{\vx=\vx'}
    \end{align*}
    and for $\kappa(\vx,\vx') = \widetilde{\kappa}(\vx-\vx')$ we have
    \[  L^2 \leq \partial_\vx^2 \widetilde{\kappa}(\vx)|_{x=0}. \]
  \item \label{item:projection}
    The projection operator $P_{1 : T}$ on the subspace $\displaystyle \Span_{t = 1: T} \{\kappa(x_t,\cdot) \} \subseteq \calH$ is given by
    \begin{align*}
      P_{1:T}f := \vk^\top (\cdot) \vK^{-1} \left< \vk(\cdot), f \right>
    \end{align*}
     where $\vk(\cdot) = \vk_{1:T}(\cdot) := \left[\kappa(\vx_1,\cdot)
     \cdots \kappa(\vx_T,\cdot) \right]^\top$
     and $\vK := \left[ \kappa(\vx_i, \vx_j) \right]_{i,j = 1:T}$; moreover, we have that
     \[ \left< \vk(\cdot), f \right> := \begin{bmatrix} \left< \kappa(\vx_1, \cdot), f \right> \\
     \vdots \\ \left< \kappa(\vx_T, \cdot), f \right> \end{bmatrix} = \begin{bmatrix} f(\vx_1) \\
     \vdots \\ f(\vx_T) \end{bmatrix}. \]
    Here $P_{1:T} P_{1:T} = P_{1:T}$ and $\|P_{1:T}\| \leq 1$ and $\|\indicator - P_{1:T}\|
    \leq 1$.
  \item \label{item:interpolation}
    Given tuples $(\vx_i, f_i)$ with $f_i = f(\vx_i)$, the minimum norm
    interpolation $\bar{f}$ with $\bar{f}(\vx_i) = f(\vx_i)$ is given by
    $\bar{f} = P_{1:T} f$. Consequently its residual $g := ( \mathbf{1} - P_{1:T}) f$
    satisfies $g(\vx_i) = 0$ for all $\vx_i \in \vx_{1:T}$.
  \end{enumerate}
\end{lemma}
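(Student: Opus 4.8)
The whole statement rests on a single tool, the reproducing property $\langle \kappa(\vx,\cdot), h\rangle_\calH = h(\vx)$ valid for every $h\in\calH$ and $\vx\in\calD$, together with its consequence $\langle \kappa(\vx,\cdot),\kappa(\vx',\cdot)\rangle_\calH = \kappa(\vx,\vx')$. The plan is to dispatch the three items in turn. The projection and interpolation claims are routine once one parametrises the span $V := \Span\{\kappa(\vx_t,\cdot)\}$ and invokes the projection theorem; the Lipschitz claim is the only part that needs genuine work, since it requires differentiating the canonical feature map $\Phi(\vx) := \kappa(\vx,\cdot)$ as an $\calH$-valued map of $\vx$.

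For the Lipschitz bound I would first argue that, under the twice-differentiability-along-the-diagonal hypothesis of Assumption~\ref{kernelAspn}, each partial $\partial_{x_i}\kappa(\vx,\cdot)$ is a legitimate element of $\calH$ and that $\Phi$ is Fr\'echet differentiable, with derivative $D\Phi(\vz)[\vv] = \sum_i v_i\,\partial_{x_i}\kappa(\vz,\cdot)$. Differentiating the identity $\langle\kappa(\vx,\cdot),\kappa(\vx',\cdot)\rangle=\kappa(\vx,\vx')$ once in each argument gives $\langle \partial_{x_i}\kappa(\vx,\cdot),\partial_{x'_j}\kappa(\vx',\cdot)\rangle = \partial_{x_i}\partial_{x'_j}\kappa(\vx,\vx')$, so that, writing $M(\vz)$ for the $D\times D$ matrix $\partial_\vz\partial_{\vz'}\kappa(\vz,\vz')|_{\vz=\vz'}$,
\[
\|D\Phi(\vz)[\vv]\|_\calH^2 = \vv^\top M(\vz)\,\vv, \qquad L^2 := \sup_{\vz}\|M(\vz)\| .
\]
The mean-value inequality applied to $t\mapsto\Phi(\vx'+t(\vx-\vx'))$ then yields $\|\Phi(\vx)-\Phi(\vx')\|_\calH\le L\|\vx-\vx'\|$, and Cauchy--Schwarz with the reproducing property gives $|h(\vx)-h(\vx')| = |\langle h,\Phi(\vx)-\Phi(\vx')\rangle|\le \|h\|_\calH L\|\vx-\vx'\|$. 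The stationary special case $\kappa(\vx,\vx')=\widetilde\kappa(\vx-\vx')$ follows by direct substitution, the mixed partial on the diagonal collapsing to a second derivative of $\widetilde\kappa$ at the origin.

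For the projection, I would write an arbitrary element of $V$ as $\vk(\cdot)^\top\valpha$ and minimise $\|f-\vk(\cdot)^\top\valpha\|_\calH^2$ over $\valpha$; the normal equations $\langle\kappa(\vx_s,\cdot),\,f-\vk(\cdot)^\top\valpha\rangle=0$ read, via the reproducing property, as $\vf=\vK\valpha$ with $\vf=\langle\vk(\cdot),f\rangle=[f(\vx_1),\dots,f(\vx_T)]^\top$, so $\valpha=\vK^{-1}\vf$ and $P_{1:T}f=\vk(\cdot)^\top\vK^{-1}\langle\vk(\cdot),f\rangle$ exactly as stated. Idempotency and the norm bounds $\|P_{1:T}\|\le1$, $\|\mathbf{1}-P_{1:T}\|\le1$ are the standard properties of an orthogonal projection onto a closed subspace. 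For the minimum-norm interpolation, the constraints $h(\vx_i)=f(\vx_i)$ involve only the directions $\kappa(\vx_i,\cdot)$ spanning $V$, so the minimiser lies in $V$; since $P_{1:T}f\in V$ and $(P_{1:T}f)(\vx_i)=\langle\kappa(\vx_i,\cdot),P_{1:T}f\rangle=\langle\kappa(\vx_i,\cdot),f\rangle=f(\vx_i)$ (because $P_{1:T}$ fixes $\kappa(\vx_i,\cdot)\in V$), we conclude $\bar f=P_{1:T}f$, and the residual $g=(\mathbf{1}-P_{1:T})f\perp V$ gives $g(\vx_i)=\langle\kappa(\vx_i,\cdot),g\rangle=0$.

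I expect the main obstacle to be the Hilbert-space differentiation in the first item: justifying rigorously that $\partial_{x_i}\kappa(\vx,\cdot)\in\calH$, that differentiation commutes with the inner product, and that the resulting norm is precisely the on-diagonal mixed partial of $\kappa$. This is exactly where Assumption~\ref{kernelAspn} is consumed, and it is the step that cannot be reduced to elementary projection theory. The remaining two items are direct corollaries of the projection theorem and require no further hypotheses.
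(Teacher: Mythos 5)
The paper never proves this lemma: it is quoted verbatim as Lemma~5 of \cite{deFreitas:2012}, and its proof is deferred entirely to that reference, so there is no in-paper argument to compare yours against. Judged on its own, your proof is correct and is the standard RKHS argument one would expect to find behind the cited result. You correctly isolate the only nontrivial ingredient -- that under twice differentiability along the diagonal the feature map $\Phi(\vx)=\kappa(\vx,\cdot)$ is differentiable as an $\calH$-valued map with $\langle \partial_{x_i}\kappa(\vx,\cdot),\partial_{x'_j}\kappa(\vx',\cdot)\rangle_\calH = \partial_{x_i}\partial_{x'_j}\kappa(\vx,\vx')$ (this is the step that consumes the kernel assumption; it is a known fact, e.g.\ Steinwart--Christmann Lemma~4.34, and citing or proving it would complete the argument) -- after which the mean-value inequality plus Cauchy--Schwarz gives the Lipschitz bound, and items~2 and~3 follow from the projection theorem and the reproducing property exactly as you say. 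Two minor points to tidy up: the formula $P_{1:T}f=\vk^\top(\cdot)\vK^{-1}\langle\vk(\cdot),f\rangle$ and your normal-equation derivation implicitly require $\vK$ to be invertible (the lemma's statement shares this implicit assumption; the degenerate case needs a pseudoinverse or a reduction to a linearly independent subset), and in the multivariate case the displayed bound $L^2\le\sup_\vx \partial_\vx\partial_{\vx'}\kappa(\vx,\vx')|_{\vx=\vx'}$ only makes sense with the diagonal mixed partial read as a matrix and measured in operator norm, which is precisely how your $M(\vz)$ handles it; likewise the sign convention in the stationary case ($-\partial_i\partial_j\widetilde\kappa$ at the origin is the positive-semidefinite object) deserves one line.
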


\begin{lemma}[Lemma 6 of~\cite{deFreitas:2012}]
 \label{lem:gpvar}
 Under the assumptions of Lemma~\ref{lem:rkhs} it follows that
 $$|f(\vx) - P_{1:T} f(\vx)| \leq \|f\|_\calH \sigma_T(\vx),$$
 where $\sigma_T^2(\vx) = \kappa(\vx,\vx) - \vk_{1:T}^\top(\vx) \vK^{-1} \vk_{1:T}(\vx)$
 and this bound is tight. Moreover, $\sigma_T^2(\vx)$ is the posterior predictive variance
 of a Gaussian process with the same kernel.
\end{lemma}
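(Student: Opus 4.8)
The plan is to leverage the reproducing property together with the explicit projection formula supplied by Lemma~\ref{lem:rkhs}. The key observation is that the interpolation residual $f - P_{1:T}f$ can be evaluated pointwise through the reproducing kernel, and that pairing $\kappa(\vx,\cdot)$ against this residual turns the left-hand side into a single inner product that Cauchy--Schwarz splits into exactly the two factors $\|f\|_\calH$ and $\sigma_T(\vx)$.

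First I would invoke the reproducing property $g(\vx) = \langle \kappa(\vx,\cdot), g\rangle$ for $g = (I - P_{1:T})f \in \calH$, giving $f(\vx) - P_{1:T}f(\vx) = \langle \kappa(\vx,\cdot), (I - P_{1:T})f\rangle$. Since $P_{1:T}$ is the orthogonal projection onto $\Span\{\kappa(\vx_t,\cdot)\}$, it is self-adjoint and idempotent (Lemma~\ref{lem:rkhs}), so I can move one copy of the residual operator onto the kernel slot and write $\langle \kappa(\vx,\cdot), (I - P_{1:T})f\rangle = \langle (I - P_{1:T})\kappa(\vx,\cdot), (I - P_{1:T})f\rangle$. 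Cauchy--Schwarz then yields $|f(\vx) - P_{1:T}f(\vx)| \le \|(I - P_{1:T})\kappa(\vx,\cdot)\|_\calH\,\|(I - P_{1:T})f\|_\calH$, and $\|(I - P_{1:T})f\|_\calH \le \|f\|_\calH$ because $\|I - P_{1:T}\| \le 1$. To identify the remaining factor, I would use the projection formula with $f = \kappa(\vx,\cdot)$: since $\langle \vk(\cdot), \kappa(\vx,\cdot)\rangle = \vk(\vx)$, one gets $P_{1:T}\kappa(\vx,\cdot) = \vk^\top(\cdot)\vK^{-1}\vk(\vx)$, and then a short computation via the reproducing property gives $\|(I - P_{1:T})\kappa(\vx,\cdot)\|_\calH^2 = \langle \kappa(\vx,\cdot), (I - P_{1:T})\kappa(\vx,\cdot)\rangle = \kappa(\vx,\vx) - \vk^\top(\vx)\vK^{-1}\vk(\vx) = \sigma_T^2(\vx)$. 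Combining the two steps delivers the stated bound. For tightness I would exhibit the extremal choice $f = (I - P_{1:T})\kappa(\vx,\cdot)$: its residual equals itself and is collinear with $(I - P_{1:T})\kappa(\vx,\cdot)$, so Cauchy--Schwarz is an equality and $\|(I-P_{1:T})f\|_\calH = \|f\|_\calH$. The ``moreover'' claim is then immediate by matching the expression $\kappa(\vx,\vx) - \vk^\top(\vx)\vK^{-1}\vk(\vx)$ against the noise-free GP posterior predictive variance derived in Section~2.

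The computation is essentially routine, so the only point that genuinely requires care is the self-adjointness of $P_{1:T}$, which is what lets me split the inner product symmetrically across the residual operator and so separate the two factors cleanly. Lemma~\ref{lem:rkhs} already supplies idempotence and the norm bound $\|I - P_{1:T}\| \le 1$, while self-adjointness follows from $P_{1:T}$ being the \emph{orthogonal} projection onto the span of the $\kappa(\vx_t,\cdot)$; keeping these properties in view is the main thing that makes the argument go through, and correctly tracking them is where I expect the only real attention to be needed.
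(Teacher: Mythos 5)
This lemma is stated in the paper without proof---it is imported verbatim as Lemma~6 of \cite{deFreitas:2012}---so there is no internal proof to compare yours against; judged on its own merits, your argument is correct and complete, and it is in substance the standard derivation (the one the cited reference itself uses). The chain of steps---reproducing property, symmetric splitting of $(\indicator - P_{1:T})$ across the inner product via idempotence and self-adjointness, Cauchy--Schwarz, and the identification $\|(\indicator-P_{1:T})\kappa(\vx,\cdot)\|_\calH^2 = \kappa(\vx,\vx)-\vk^\top(\vx)\vK^{-1}\vk(\vx) = \sigma_T^2(\vx)$---is exactly the classical ``power function'' bound for kernel interpolation. Your tightness witness $f=(\indicator-P_{1:T})\kappa(\vx,\cdot)$ also works: its residual is itself and is collinear with $(\indicator-P_{1:T})\kappa(\vx,\cdot)$, so Cauchy--Schwarz holds with equality and both sides equal $\sigma_T^2(\vx)$ (and when $\sigma_T(\vx)=0$ the bound is trivially attained). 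You are right that the only delicate ingredient is self-adjointness of $P_{1:T}$, which Lemma~\ref{lem:rkhs} does not state explicitly; your justification is sound, and it can be nailed down in either of two ways: (i) $P_{1:T}$ fixes $\Span_{t=1:T}\{\kappa(\vx_t,\cdot)\}$ and annihilates its orthogonal complement, since $g$ orthogonal to the span forces $\langle \vk(\cdot),g\rangle = (g(\vx_1),\dots,g(\vx_T))^\top = \vzero$ and hence $P_{1:T}g=0$, which characterizes the orthogonal (hence self-adjoint) projection; or (ii) directly, $\langle P_{1:T}f,g\rangle = \sum_{i,j}(\vK^{-1})_{ij}\, f(\vx_j)\, g(\vx_i) = \langle f, P_{1:T}g\rangle$ by symmetry of $\vK^{-1}$. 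The ``moreover'' clause is, as you note, immediate by matching $\kappa(\vx,\vx)-\vk^\top(\vx)\vK^{-1}\vk(\vx)$ against the noise-free GP posterior variance formula given in Section~2 of the paper.
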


\begin{lemma}[Adapted from Proposition 1 of~\cite{deFreitas:2012}]
\label{lem:varbound}
 Let $\kappa: \mathbb R^D \times \mathbb R^D \to \mathbb R$ be a kernel that is
 twice differentiable along the diagonal $\{(\vx,\vx) \,|\, \vx \in \mathbb R^D\}$,
 with $L$ defined as in Lemma \ref{lem:rkhs}.\ref{item:lipschitz},
 and $f$ be an element of the RKHS with kernel $\kappa$.
 If $f$ is evaluated at point $\vx$, then for any other point $\vy$
 we have $\sigma_T(\vy) \leq L\|\vx-\vy\|$.
\end{lemma}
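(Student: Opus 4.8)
The plan is to read $\sigma_T(\vy)$ as the length of the residual obtained when the kernel section $\kappa(\vy,\cdot)$ is projected onto the span of the sampled sections, and then to exploit that the section at the evaluation point $\vx$ is fixed by this projection. First I would record the identity $\sigma_T^2(\vy) = \|(\mathbf{1}-P_{1:T})\kappa(\vy,\cdot)\|_\calH^2$. This follows by expanding the right-hand side: because $\mathbf{1}-P_{1:T}$ is an orthogonal projection it is idempotent and self-adjoint, so $\|(\mathbf{1}-P_{1:T})\kappa(\vy,\cdot)\|_\calH^2 = \langle \kappa(\vy,\cdot),(\mathbf{1}-P_{1:T})\kappa(\vy,\cdot)\rangle$, and substituting the explicit formula for $P_{1:T}$ from Lemma~\ref{lem:rkhs}.\ref{item:projection} together with the reproducing identity $\langle \kappa(\vx_i,\cdot),\kappa(\vy,\cdot)\rangle = \kappa(\vx_i,\vy)$ gives $\kappa(\vy,\vy) - \vk^\top(\vy)\vK^{-1}\vk(\vy)$, which is exactly $\sigma_T^2(\vy)$ as defined in Lemma~\ref{lem:gpvar}. (Equivalently, this identity is the tightness assertion of Lemma~\ref{lem:gpvar}.)

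Next I would use that $f$ is evaluated at $\vx$, so $\vx$ lies among the sampled points $\vx_{1:T}$ and hence $\kappa(\vx,\cdot)\in\Span_{1:T}\{\kappa(\vx_t,\cdot)\}$. Consequently $P_{1:T}\kappa(\vx,\cdot)=\kappa(\vx,\cdot)$, i.e.\ $(\mathbf{1}-P_{1:T})\kappa(\vx,\cdot)=0$. Subtracting this zero vector and using $\|\mathbf{1}-P_{1:T}\|\leq 1$ (Lemma~\ref{lem:rkhs}.\ref{item:projection}) yields
\[
\sigma_T(\vy) = \|(\mathbf{1}-P_{1:T})\bigl(\kappa(\vy,\cdot)-\kappa(\vx,\cdot)\bigr)\|_\calH \leq \|\kappa(\vy,\cdot)-\kappa(\vx,\cdot)\|_\calH .
\]
Note that this step works whether or not $\vx$ is the only sampled point: adding further observations only enlarges the subspace and shrinks the residual, so it suffices that $\vx$ be one of the sampled points, which is why the same bound later applies with $\vx$ taken to be the nearest sampled point.

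It then remains to establish the feature-map Lipschitz bound $\|\kappa(\vy,\cdot)-\kappa(\vx,\cdot)\|_\calH \leq L\|\vx-\vy\|$, and the cleanest route is to dualize Lemma~\ref{lem:rkhs}.\ref{item:lipschitz}. For any $g\in\calH$ with $\|g\|_\calH\leq 1$, the reproducing property gives $\langle g,\kappa(\vy,\cdot)-\kappa(\vx,\cdot)\rangle = g(\vy)-g(\vx)$, and since $g$ is Lipschitz with constant $\|g\|_\calH L\leq L$ this is at most $L\|\vx-\vy\|$ in absolute value; taking the supremum over the unit ball of $\calH$ recovers $\|\kappa(\vy,\cdot)-\kappa(\vx,\cdot)\|_\calH$ and proves the bound. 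Chaining the three displays gives $\sigma_T(\vy)\leq L\|\vx-\vy\|$.

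I expect the only delicate point to be the first identity, which links the Gaussian-process posterior variance to the RKHS projection residual; everything after it is a contraction estimate plus an appeal to the already-established Lipschitz property. If one preferred to avoid invoking Lemma~\ref{lem:rkhs}.\ref{item:lipschitz} and instead prove the feature-map bound from scratch, the work would move into bounding $v^\top M(\vz)\,v$ by $L^2\|v\|_2^2$ uniformly along the segment from $\vx$ to $\vy$, where $v=\vx-\vy$ and $M(\vz)$ has entries $\partial_{z_i}\partial_{z_j'}\kappa(\vz,\vz')|_{\vz=\vz'}$; the dualization above makes this computation unnecessary.
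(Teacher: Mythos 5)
Your proof is correct, and it reaches the bound by a somewhat different route than the paper. The paper's proof is ``dual'': it takes an arbitrary $f\in\calH$, forms the residual $g=(\indicator-P_{1:T})f$, notes $g(\vx)=0$ and applies the Lipschitz bound of Lemma~\ref{lem:rkhs}.\ref{item:lipschitz} to get $|g(\vy)|\leq L\|f\|_\calH\|\vy-\vx\|$, and then invokes the \emph{tightness} of the bound $|g(\vy)|\leq\|f\|_\calH\sigma_T(\vy)$ from Lemma~\ref{lem:gpvar} to cancel $\|f\|_\calH$ and conclude. Your proof is the ``primal'' version of the same duality: you make the tightness assertion concrete by identifying $\sigma_T(\vy)$ with the residual norm $\|(\indicator-P_{1:T})\kappa(\vy,\cdot)\|_\calH$ of the kernel section, then use the vanishing of the residual at the sampled point, the contraction $\|\indicator-P_{1:T}\|\leq 1$, and a dualized Lipschitz bound on $\|\kappa(\vy,\cdot)-\kappa(\vx,\cdot)\|_\calH$. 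The ingredients are the same (Lemma~\ref{lem:rkhs}'s Lipschitz and projection properties, and the variance--residual correspondence of Lemma~\ref{lem:gpvar}), but your organization has a genuine advantage: the paper's step ``both inequalities bound the same quantity and the latter is tight, hence\dots'' is logically valid but requires the reader to unpack what tightness means (equality attained at $f^\star=(\indicator-P_{1:T})\kappa(\vy,\cdot)$); your explicit computation of $\sigma_T^2(\vy)$ as a squared residual norm removes that opacity and turns the whole argument into a transparent chain of norm inequalities. Your side remark that only membership of $\vx$ in the sample set is needed (not that it be the sole sample) is also worth making, since the lemma is later applied with $\vx$ taken to be the nearest of many sampled points. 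The only point you lean on that Lemma~\ref{lem:rkhs} does not literally state is self-adjointness of $P_{1:T}$, but this follows either directly from the explicit formula (symmetry of $\vK^{-1}$) or abstractly from idempotence together with $\|P_{1:T}\|\leq 1$, so it is not a gap.
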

\begin{proof}
Let $\calH$ be the RKHS corresponding to $\kappa$ and $f \in \calH$ an arbitrary
element with $g := (\indicator - P_{1:T})f$; the residual defined in
lemma~\ref{lem:rkhs}.\ref{item:interpolation}.
Since $g \in \calH$, we have by Lemma~\ref{lem:rkhs}.\ref{item:lipschitz},
$g$ is Lipschitz.
Thus we have that for any point $\vy$:
\begin{equation}
\label{eqn:lipbound}
|g(\vy)| \leq L \|g\|_{\calH} \|\vy-\vx\| \leq L \|f\|_{\calH} \|\vy-\vx\|,
\end{equation}
where the second inequality is guaranteed
by Lemma~\ref{lem:rkhs}.\ref{item:projection}.
On the other hand, by Lemma~\ref{lem:gpvar},
we know that for all $\vy$ we have the following tight bound:
\begin{equation}
\label{eqn:sigbound}
 |g(\vy)| \leq  \|f\|_{\calH}\sigma_T(\vy)
\end{equation}
Now, given the fact that both inequalities (\ref{eqn:lipbound})
and~(\ref{eqn:sigbound}) are
bounding the same quantity and that the latter
is a tight estimate, we necessarily have that:
$$
\|f\|_{\calH}\sigma_T(\vy) \leq L \|f\|_{\calH} \|\vy-\vx\|.
$$
Canceling $\|f\|_{\calH}$ gives us the result.
\end{proof}


\begin{lemma}[Adapted from Lemma 5.1 of~\cite{Srinivas:2010}]
 \label{sgpbound}
 Let $f$ be a sample from a GP.
 Consider $\eta \in (0, 1)$ and set $B_T = 2\log(\pi_T/\eta)$
 where $\sum_{i=1}^{\infty}\pi_T^{-1} = 1$, $\pi_T > 0$. Then,
 $$|f(\vx_T) - \mu_T(\vx_T)| \leq B_T^{\frac{1}{2}}\sigma_T(\vx_T)
 \mbox{ }\forall T\geq 1$$
 holds with probability at least $1-\eta$.
\end{lemma}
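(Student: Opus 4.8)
The plan is to establish a pointwise Gaussian concentration bound for each fixed index $T$ and then to combine these bounds by a union bound over $T$, following the strategy of~\cite{Srinivas:2010}. The key structural fact we exploit is that, because $f$ is a sample from the GP, the posterior predictive distribution of $f(\vx_T)$ conditioned on the data observed before $\vx_T$ is evaluated is exactly Gaussian with mean $\mu_T(\vx_T)$ and variance $\sigma_T^2(\vx_T)$. Consequently, each event of the form $|f(\vx_T) - \mu_T(\vx_T)| > B_T^{1/2}\sigma_T(\vx_T)$ reduces to a one-dimensional Gaussian tail event, and the whole proof amounts to choosing $B_T$ so that these tails sum to $\eta$.

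First I would fix $T$ and condition on the history $\data_{T-1}$ that determines the point $\vx_T$. Conditionally, $Z := (f(\vx_T) - \mu_T(\vx_T))/\sigma_T(\vx_T)$ is a standard normal variable (the degenerate case $\sigma_T(\vx_T) = 0$ is trivial), so the elementary tail estimate $\Pr[|Z| > c] \leq e^{-c^2/2}$, valid for $c \geq 0$, applies. Substituting $c = B_T^{1/2}$ with $B_T = 2\log(\pi_T/\eta)$ yields $e^{-c^2/2} = e^{-\log(\pi_T/\eta)} = \eta/\pi_T$, so that
\[
\Pr\!\left[\,|f(\vx_T) - \mu_T(\vx_T)| > B_T^{1/2}\sigma_T(\vx_T)\,\right] \leq \frac{\eta}{\pi_T}.
\]
Because this conditional bound does not depend on the realized history, it survives integration over $\data_{T-1}$, giving the same unconditional per-index bound.

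Finally I would union bound over all $T \geq 1$. The probability that the inequality is violated for at least one $T$ is at most $\sum_{T=1}^{\infty}\eta/\pi_T = \eta\sum_{T=1}^{\infty}\pi_T^{-1} = \eta$, using the hypothesis $\sum_{T=1}^{\infty}\pi_T^{-1} = 1$; taking complements delivers the claim with probability at least $1-\eta$. The one genuinely delicate point — and the step I would treat most carefully — is that the $\vx_T$ are chosen adaptively and are hence random, so $f(\vx_T)$ is not a priori a fixed Gaussian. The resolution is precisely the conditioning above: given $\data_{T-1}$ the selection rule makes $\vx_T$ measurable, the GP posterior of $f(\vx_T)$ is Gaussian, and the fixed-point tail bound therefore transfers verbatim to the adaptively selected point.
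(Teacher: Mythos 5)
Your proof is correct and follows essentially the same route as the paper's: a pointwise Gaussian tail bound $\Pr[|Z|>c]\leq e^{-c^2/2}$ applied at $c=B_T^{1/2}$, followed by a union bound over $T$ with $\sum_T \pi_T^{-1}=1$ making the failure probabilities sum to $\eta$. The only difference is cosmetic: you spell out the conditioning on $\data_{T-1}$ that justifies treating the adaptively chosen $f(\vx_T)$ as conditionally Gaussian (a point the paper states but does not elaborate), while the paper instead derives the Gaussian tail inequality from scratch rather than citing it.
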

\begin{proof}
 For $\vx_T$ we have that $f(\vx) \sim \calN(\mu_T(\vx_T), \sigma_T(\vx_T))$
 since $f$ is a sample from the GP.
 Now, if $r \sim \calN(0, 1)$, then
 \begin{eqnarray*}
  \mathbb{P}(r>c) &=& e^{-c^2/2}(2\pi)^{-1/2}\int e^{-(r-c)^2/2-c(r-c)} dr \\
  &<& e^{-c^2/2}\mathbb{P}(r>0) = \frac{1}{2}e^{-c^2/2}.
 \end{eqnarray*}
 Thus we have that
 \begin{eqnarray*}
  \mathbb{P}\left(f(\vx) - \mu_T(\vx) > B_T^{1/2} \sigma_T(\vx)\right) =
  \mathbb{P}(r > B_T^{1/2})
  < \frac{1}{2}e^{-B_T/2}.
 \end{eqnarray*}
 By symmetry and the union bound, we have that
 $\mathbb{P}\left(|f(\vx) - \mu_T(\vx)|> B_T^{1/2} \sigma_T(\vx)\right) < e^{-B_T/2}.$
 By applying the union bound again, we derive
 $$\mathbb{P}\left(|f(\vx) - \mu_T(\vx)|> B_T^{1/2} \sigma_T(\vx) \mbox{ }
 \forall T\geq 1\right)
 < \sum_{T=1}^{\infty} e^{-B_T/2}.$$
By substituting $B_T = 2\log(\pi_T/\eta)$, we obtain the result.
 As in~\cite{Srinivas:2010}, we can set $\pi_T = \pi^2T^2/6$.
\end{proof}

Since each node's UCB and LCB are only evaluated at most once, we
give the following shorthands in notation.
Let $N(\vx)$ be the number of evaluations of confidence bounds by the time
the UCB of $\vx$ is evaluated (line 12 of Algorithm~\ref{alg:sooucb})
and let $T(\vx) = |\data_t|$
be the time the UCB of $\vx$ is evaluated.
 Define $\mathcal{U}(\vx) = \calU_{N(\vx)}(\vx|\mathcal{D}_{T(\vx)}) =
 \mu(\vx|\data_{T(\vx)}) + B_{N(\vx)} \sigma(\vx|\data_{T(\vx)})$
and $\mathcal{L}(\vx) = \calL_{N(\vx)}(\vx|\mathcal{D}_{T(\vx)}) =
 \mu(\vx|\data_{T(\vx)}) - B_{N(\vx)} \sigma(\vx|\data_{T(\vx)})$.

\begin{lemma}
 \label{3dopt} Consider $\calB(\vx^*, \rho)$ and $\gamma in (0,1)$ as in Assumptions 2 and 3.
 Suppose $\calL(\vx^*_h) \leq f(\vx^*_h)
 \leq \mathcal{U}(\vx^*_h)$.
 If $\vx_h^* \in \calB(\vx^*, \rho)$ and $\delta(h) <  \epsilon_0$ then
 there exists a constant $\bar{c}$ such that
 $\calL(\vx_h^*) \geq f^* - \bar{c}B_{{N(\vx^*_h)}}\gamma^{\frac{h}{2}}$.
\end{lemma}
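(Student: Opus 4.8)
The plan is to reduce the claim to an upper bound on the posterior standard deviation $\sigma(\vx^*_h)$ and then invoke Lemma~\ref{lem:varbound}. First I would unpack the hypothesis $\mathcal{L}(\vx^*_h) \le f(\vx^*_h) \le \mathcal{U}(\vx^*_h)$, which is exactly $|f(\vx^*_h) - \mu(\vx^*_h|\data_{T(\vx^*_h)})| \le B_{N(\vx^*_h)}\sigma(\vx^*_h)$. Writing $\mathcal{L}(\vx^*_h) = \mu(\vx^*_h|\data_{T(\vx^*_h)}) - B_{N(\vx^*_h)}\sigma(\vx^*_h)$ and substituting the upper half of this inequality gives $\mathcal{L}(\vx^*_h) \ge f(\vx^*_h) - 2B_{N(\vx^*_h)}\sigma(\vx^*_h)$. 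Next I would lower-bound $f(\vx^*_h)$: since the level-$h$ cell of $\vx^*_h$ also contains $\vx^*$, Assumption~\ref{as:bd} gives $c_1\|\vx^*_h - \vx^*\|^\alpha_2 = \ell(\vx^*_h,\vx^*) \le \delta(h) = c\gamma^h$, and the global envelope of Assumption~\ref{envelop} then yields $f(\vx^*_h) \ge f^* - c_1\|\vx^*_h - \vx^*\|^\alpha_2 \ge f^* - c\gamma^h$. At this point the target reduces to showing $\sigma(\vx^*_h) = \mathcal{O}(\gamma^{h/2})$.

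The crux is to exhibit a point at which $f$ has actually been evaluated lying within $\mathcal{O}(\gamma^{h/2})$ of $\vx^*_h$, so that Lemma~\ref{lem:varbound} bounds $\sigma(\vx^*_h) \le L\|\vx^*_h - \cdot\|$. I would argue such a point always exists, via two complementary mechanisms. If the relevant nearby node (e.g.\ the parent $\vx^*_{h-1}$) was evaluated, it already belongs to $\data_{T(\vx^*_h)}$ and, lying in the common coarser cell, satisfies $\|\vx^*_h - \vx^*_{h-1}\| \le (\delta(h-1)/c_1)^{1/\alpha}$, which is $\mathcal{O}(\gamma^{h/2})$ when $\alpha=2$ and $\mathcal{O}(\gamma^h)$ when $\alpha=1$. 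If instead BaMSOO declined to evaluate it, then by construction its UCB fell below $f^+$; combining this with the confidence bound and the global envelope forces $f^+ > f^* - \delta(\cdot)$, hence $f^+ > f^* - \epsilon_0$, so by the gap condition in Assumption~\ref{envelop} the incumbent $\vx^+$ must lie in $\calB(\vx^*,\rho)$. There the local quadratic upper bound $f(\vx^+) \le f^* - c_2\|\vx^+ - \vx^*\|^2_2$ applies and gives $c_2\|\vx^+-\vx^*\|^2_2 < \delta(\cdot)$, i.e.\ $\|\vx^+ - \vx^*\| = \mathcal{O}(\gamma^{h/2})$; since $\|\vx^*_h - \vx^*\| = \mathcal{O}(\gamma^{h/2})$ as well, the sampled incumbent $\vx^+$ is within $\mathcal{O}(\gamma^{h/2})$ of $\vx^*_h$ by the triangle inequality. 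Either way, Lemma~\ref{lem:varbound} gives $\sigma(\vx^*_h) \le L\,C_0\,\gamma^{h/2}$ for a constant $C_0$ depending only on $c,c_1,c_2,\gamma,\alpha$.

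Finally I would assemble the pieces: $\mathcal{L}(\vx^*_h) \ge f^* - c\gamma^h - 2LC_0\,B_{N(\vx^*_h)}\gamma^{h/2}$. Using $\gamma^h \le \gamma^{h/2}$ together with the fact that $B_N = 2\log(\pi^2N^2/6\eta)$ is bounded below by a positive constant $B_1$ for all $N \ge 1$ and $\eta\in(0,1)$, the first term is absorbed into the second, yielding $\mathcal{L}(\vx^*_h) \ge f^* - \bar c\,B_{N(\vx^*_h)}\gamma^{h/2}$ with $\bar c = c/B_1 + 2LC_0$. The main obstacle I anticipate is the bookkeeping in the ``declined'' case: one must ensure that the inequality produced when a node is not evaluated genuinely drives $f^+$ to within $\epsilon_0$ of $f^*$ so that $\vx^+$ lands inside $\calB(\vx^*,\rho)$, which is where the quadratic upper bound lives. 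This is delicate because the inequality coming from the parent involves $\delta$ at level $h-1$ rather than $h$, whereas the hypothesis only supplies $\delta(h) < \epsilon_0$; reconciling these indices (e.g.\ by running the dichotomy on whether $\vx^*_h$ \emph{itself} is evaluated, which cleanly yields $f^+ > f^* - \delta(h)$) is the step that requires the most care.
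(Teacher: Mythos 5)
Your refined version---running the dichotomy on whether $\vx^*_h$ itself is evaluated---is exactly the paper's proof, and your treatment of the non-evaluated case matches it step for step: $f^+ > \calU(\vx^*_h) \geq f(\vx^*_h) \geq f^* - \delta(h) > f^* - \epsilon_0$ places the incumbent $\vx^+$ in $\calB(\vx^*,\rho)$; the quadratic upper bound applied to both $\vx^+$ and $\vx^*_h$ plus the triangle inequality give $\|\vx^+ - \vx^*_h\| \leq 2\sqrt{\delta(h)/c_2}$; Lemma~\ref{lem:varbound} (applied to the sampled incumbent) bounds $\sigma$; and absorbing $c\gamma^h$ via $\gamma^h \leq \gamma^{h/2}$ and $B_N \geq B_1 > 0$ yields $\bar c = c/B_1 + 4L\sqrt{c/c_2}$, the same constant as in the paper. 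Two remarks on the parts that differ. First, your original parent-based dichotomy has a second defect beyond the $\delta(h-1)$-versus-$\delta(h)$ index mismatch you flagged: the step $\calU(\vx^*_{h-1}) \geq f(\vx^*_{h-1})$ needs the confidence bound for the \emph{parent}, which the lemma's hypothesis (stated only for $\vx^*_h$) does not supply; so switching to the node-itself dichotomy is necessary, not merely tidier. Second, your opening reduction (``the target reduces to showing $\sigma(\vx^*_h) = \mathcal{O}(\gamma^{h/2})$'') is not available in the branch where $\vx^*_h$ \emph{is} evaluated: the algorithm evaluates precisely those nodes whose uncertainty has not been ruled out, so $\sigma(\vx^*_h)$ at check time can be large and no nearby sampled point need exist (all ancestors may have been expanded off their LCB values without ever being evaluated). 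The paper disposes of this branch by calling the statement ``trivially true,'' the honest reading being that what is actually used downstream (in Lemma~\ref{lem:induction}) is a lower bound on the value $g$ assigned to the optimal node, which in the evaluated case equals $f(\vx^*_h) \geq f^* - \delta(h) \geq f^* - \bar c B_{N(\vx^*_h)}\gamma^{h/2}$; your writeup should close that branch with this observation rather than attempt a $\sigma$ bound there.
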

\begin{proof}
If $\vx_h^*$ is not evaluated then
$f(\vx^+) \geq \mathcal{U}_T(\vx_h^*) \geq f^* - \delta(h) \geq f^* - \epsilon_0$
which implies that $\vx^+ \in \calB(\vx^*, \rho)$.
Therefore,
$f^* - c_2\|\vx^+ - \vx^*\|^2 \geq f(\vx^+)
\geq \mathcal{U}_T(\vx_h^*) \geq f^* - \delta(h)$
which in turn implies that
$\|\vx^+ - \vx^*\| \leq \sqrt{\frac{\delta(h)}{c_2}}$.
Similarly
$f^* - c_2\|\vx_h^* - \vx^*\|^2 \geq f(\vx_h^*) \geq f^* - \delta(h)$.
Therefore $\|\vx_h^* - \vx^*\| \leq \sqrt{\frac{\delta(h)}{c_2}}$.
By the triangle inequality, we have
$$\|\vx^+ - \vx_h^*\|
\leq \|\vx^+ - \vx^*\| + \|\vx_h^* - \vx^*\|
\leq 2\sqrt{\frac{\delta(h)}{c_2}}.$$
By Lemma \ref{lem:varbound}, we have that
$\sigma_{T(\vx^*_h)}(\vx^*_h) \leq 2L\sqrt{\frac{\delta(h)}{c_2}}$.
By the definition of $\calL_T$, we can argue that
\begin{eqnarray*}
\calL (\vx^*_h) &\geq& \mathcal{U} (\vx_h^*) -
4B_{{N(\vx^*_h)}} L\sqrt{\frac{\delta(h)}{c_2}} \\
&\geq& f^* - \delta(h) - 4B_{{N(\vx^*_h)}} L\sqrt{\frac{\delta(h)}{c_2}} \\
 &=& f^* - c\gamma^h - 4B_{{N(\vx^*_h)}} L\sqrt{\frac{c\gamma^h}{c_2}}.
\end{eqnarray*}
Note that since $\gamma \in (0, 1)$, $\gamma < \gamma^{1/2}$.
Assume that $B_1 = b$.
Let $\bar{c} = c/b+4L\sqrt{\frac{c}{c_2}}$.
Since $B_N > B_1$ $\forall N > 1$, we have the statement.

If $\vx_h^*$ is evaluated then the statement is trivially true.
\end{proof}

\begin{mydefinition}
 Let
 $\bar{\gamma} := \gamma^{\frac{1}{2}}$,
 $\bar{\delta}_{h} := \bar{c} B_{N(\vx^*_h)} \bar{\gamma}^h$, and
 $I_{h}^{\epsilon} = \{ (h, i) : f(\vx_{h, i}) + \epsilon \geq f^* \}$.
\end{mydefinition}




\begin{lemma}
\label{sgpexp}
 Assume that
 $h_{\max} = n^{\epsilon}$.
 For a node $\vx_{h, i}$ at level $h$,
 $B_{N(\vx_{h, i})} = \mathcal{O}(\sqrt{h})$.
\end{lemma}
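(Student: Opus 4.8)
The plan is to reduce the claim to a statement about the growth of the counter $N(\vx_{h,i})$, since $B_N$ depends on $N$ only logarithmically. Recall that $B_N = \sqrt{2\log(\pi^2 N^2/6\eta)}$, so $B_N = \Theta(\sqrt{\log N})$ up to constants depending on $\eta$. Consequently, proving $B_{N(\vx_{h,i})} = \mathcal{O}(\sqrt{h})$ is \emph{equivalent} to showing $\log N(\vx_{h,i}) = \mathcal{O}(h)$, i.e.\ that $N(\vx_{h,i})$ grows at most exponentially in the depth $h$. This is the only thing I actually need to establish.

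Next I would relate the confidence-bound counter $N$ to the number of node expansions. By inspection of Algorithm~\ref{alg:sooucb}, the counter $N$ is incremented exactly once for each child whose UCB is tested (line 12), and each expansion of a node produces $k$ children. Hence, if $n$ is the number of node expansions carried out by the time the UCB of $\vx_{h,i}$ is evaluated, then $N(\vx_{h,i}) \leq k\,n$. It therefore suffices to show that $n = \mathcal{O}(c^{h})$ for some constant $c$, after which $\log N(\vx_{h,i}) \leq \log k + \log n = \mathcal{O}(h)$ and the result follows by substitution into $B_N$.

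The remaining step is to bound the number of expansions that can precede the evaluation of a level-$h$ node, and here I would use the hierarchical partition together with the depth cap $h_{\max}(n) = n^{\epsilon}$. A node at level $h$ can only be produced once the tree has reached depth $h$, and by the branching structure the total number of cells at levels $0,\dots,h$ is $\sum_{j=0}^{h} k^{j} \leq k^{h+1}/(k-1) = \mathcal{O}(k^{h})$. The role of the cap $h_{\max}(n) = n^{\epsilon}$ is to prevent the tree from descending to level $h$ before enough expansions have accumulated: since a node at depth $h-1$ is expanded only while $h-1 \leq h_{\max}(n) = n^{\epsilon}$, the algorithm reaches depth $h$ only through expansions at shallower levels, of which there are at most $\mathcal{O}(k^{h})$. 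This yields $n = \mathcal{O}(k^{h})$, hence $N(\vx_{h,i}) = \mathcal{O}(k^{h+1})$ and $B_{N(\vx_{h,i})} = \mathcal{O}(\sqrt{h})$.

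The main obstacle is precisely this last bound on $n$. The subtle case is that SOO may already have expanded nodes at levels \emph{deeper} than $h$ in a promising branch before a given level-$h$ node is processed, so a naive count of shallow cells is not immediately sufficient. The key is that the cap $h_{\max}(n) = n^{\epsilon}$, combined with the ``at most one expansion per level per sweep'' rule, keeps the depth from outrunning the number of expansions, so that the counter $N$ cannot grow faster than exponentially in $h$ before a level-$h$ node is evaluated. I would make this precise by charging each expansion to a distinct cell and invoking the per-level cell count $k^{j}$ at each level together with the depth constraint, which is exactly where the choice $h_{\max}(n)=n^{\epsilon}$ with $\epsilon < 1$ does the work.
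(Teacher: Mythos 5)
Your opening reduction is exactly the paper's: since $B_N = \sqrt{2\log(\pi^2N^2/6\eta)}$, it suffices to show $\log N(\vx_{h,i}) = \mathcal{O}(h)$, and since each expansion evaluates the confidence bounds of $k$ children, $N \leq k\,n$, so the whole burden falls on bounding the number of expansions $n$ at the time a level-$h$ node is evaluated. The gap is in that last step. Your claimed bound $n = \mathcal{O}(k^{h})$ does not follow from charging expansions to distinct cells at levels $0,\dots,h$: before a given level-$h$ node is evaluated, the algorithm can perform many expansions at levels \emph{deeper} than $h$ in other, more promising branches, and these cannot be charged to the $\mathcal{O}(k^{h})$ shallow cells. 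You flag exactly this difficulty, but your proposed fix reads the depth cap in the wrong direction: $h_{\max}(n) = n^{\epsilon}$ upper-bounds the \emph{depth} in terms of $n$ (equivalently, it lower-bounds $n$ once the tree is deep); it does not upper-bound $n$ in terms of $h$. In fact $\mathcal{O}(k^{h})$ is not the right bound: a single sweep of the outer while loop can perform up to $h_{\max}(n)$ expansions (one per level), so the expansions that accumulate while a level-$h$ leaf waits its turn can be of order $k^{h/(1-\epsilon)}$, strictly larger than $k^{h}$.

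The paper closes this gap by counting \emph{sweeps} rather than cells, in two steps. First, letting $n_i$ be the number of expansions after $i$ sweeps, at most $h_{\max}(n_i)$ expansions occur in sweep $i+1$, so $n_{i+1} \leq n_i + n_i^{\epsilon}$, and induction gives $n_i \leq i^{1/(1-\epsilon)}$. Second, any node at level $h$ that is ever expanded is expanded within $\mathcal{O}(2^{h})$ sweeps: in every sweep that processes level $h$ while that node is a leaf, either the best level-$h$ leaf passes the $\nu_{\max}$ test or it is blocked by an expansion at a shallower level, so in either case at least one node from the pool of fewer than $2^{h+1}$ cells at levels $\leq h$ is expanded, and each such cell can be expanded only once. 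Combining the two steps, $n \leq \left(2^{h}\right)^{1/(1-\epsilon)} = 2^{h/(1-\epsilon)}$ at the time of interest, hence $\log N \leq \log(2n) = \mathcal{O}(h)$ because $\epsilon < 1$ is a fixed constant, and $B_{N(\vx_{h,i})} = \mathcal{O}(\sqrt{h})$ follows. So your conclusion would survive with any bound of the form $k^{ch}$ for constant $c$, but your argument as written establishes no exponential bound at all; the sweep-counting recursion $n_{i+1} \leq n_i + h_{\max}(n_i)$, which converts the per-sweep, per-level expansion limit into control of $n$, is the missing ingredient.
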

\begin{proof}
 Assume that there are $n_i$ nodes expanded at the end of iteration $i$
 of the outer loop (the while loop).
 In the $i+1^{th}$ iteration of the outer loop,
 there can be at most $h_{\max}(n_i)$ additional expansions added.
 Thus the total number of expansions at the end of iteration $i$
 is at most $n_{i-1} + h_{\max}(n_{i-1})$.
 We can prove by induction
 that $n_i \leq i^{\frac{1}{1-\epsilon}}$.
 Since any node at level $h$ would be expanded after
 at most $2^h$ iterations, at the time of expansion of any node at level $h$,
 we have that $n < (2^h)^{\frac{1}{1-\epsilon}} = 2^{\frac{h}{1-\epsilon}}$
 where $n$ is the total number of expansions.
 Thus, there would be at most $2\times 2^{\frac{h}{1-\epsilon}}$ evaluations.
 Hence,
 $$B_{N(\vx_{h, i})} \leq \sqrt{2\log(\pi^2 2^{\frac{2h}{1-\epsilon}+2}/{6\eta})}
 \leq \sqrt{2\log(2^{\frac{2h}{1-\epsilon}+2}) + 2\log(\pi^2/{6\eta)}}
 = \mathcal{O}(\sqrt{h}).
 $$
\end{proof}

\begin{lemma}
 \label{fintime}
 After a finite number of node expansions,
 an optimal node $\vx_{h_0}^* \in \calB(\vx^*, \rho)$ is expanded
 such that
 $\bar{c}B_{N(\vx_{h_0}^*)}\bar{\gamma}^h_0 \leq \epsilon_0$.
 Also $\forall h > h_0$, we have that
 $\bar{c}B_{N(\vx^*_h)}\bar{\gamma}^h \leq \epsilon_0$ and
 $\vx_{h}^* \in \calB(\vx^*, \rho)$.
\end{lemma}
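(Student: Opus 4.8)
The plan is to split the claim into a \emph{numerical} part (that the envelope $\bar{\delta}_h$ and the cell diameter $\delta(h)$ eventually drop below $\epsilon_0$, and that the optimal node then lies in $\calB(\vx^*,\rho)$) and a \emph{dynamical} part (that BaMSOO actually expands an optimal node at such a depth in finite time). Throughout I would work on the probability-$(1-\eta)$ event of Lemma~\ref{sgpbound}, on which $\mathcal{L}(\vx) \le f(\vx) \le \mathcal{U}(\vx)$ holds for every confidence bound simultaneously. A useful consequence is that $g(\vx) \le f(\vx)$ for every node, since $g$ equals either $f$ or the smaller quantity $\mathcal{L}(\vx)$; hence any node whose $g$-value exceeds $f^*-\epsilon_0$ must, by Assumption~\ref{envelop}, lie inside $\calB(\vx^*,\rho)$.

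For the numerical part I would first note that both relevant sequences vanish: $\delta(h)=c\gamma^h\to 0$, and since $\bar{\gamma}=\gamma^{1/2}\in(0,1)$ while Lemma~\ref{sgpexp} gives $B_{N(\vx^*_h)}=\mathcal{O}(\sqrt{h})$, the envelope $\bar{\delta}_h=\bar{c}\,B_{N(\vx^*_h)}\bar{\gamma}^h=\mathcal{O}(\sqrt{h}\,\bar{\gamma}^h)\to 0$. To place the optimal node in the ball, observe that the cell of $\vx^*_h$ contains $\vx^*$, so Assumption~\ref{as:bd} gives $c_1\|\vx^*_h-\vx^*\|_2^\alpha=\ell(\vx^*_h,\vx^*)\le\delta(h)$; Assumption~\ref{envelop} then yields $f(\vx^*_h)\ge f^*-\delta(h)$, which exceeds $f^*-\epsilon_0$ once $\delta(h)<\epsilon_0$, forcing $\vx^*_h\in\calB(\vx^*,\rho)$ because every point outside the ball has value at most $f^*-\epsilon_0$. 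Taking $h_0$ to be the smallest level past which $\delta(h)<\epsilon_0$, $\bar{\delta}_h\le\epsilon_0$, and $\vx^*_h\in\calB(\vx^*,\rho)$ all hold then settles the ``$\forall h>h_0$'' conclusion immediately.

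The dynamical part is where the real work lies, and I would follow the SOO argument of~\cite{Munos:2011}. Since the while loop expands at least one leaf per round and each level holds only finitely many cells, it suffices to show that the deepest expanded optimal level $h^\dagger$ grows without bound; once $h^\dagger$ reaches $h_0$ the first assertion holds and the second follows from the numerical part. The standard progress argument shows that in any round failing to expand the optimal leaf $\vx^*_{h^\dagger+1}$, some competing node of \emph{higher} $g$-value at a level $\le h^\dagger+1$ is expanded instead; counting these via the near-optimality of their cells guaranteed by Assumptions~\ref{envelop}--\ref{as:cell} bounds the number of such rounds, so $\vx^*_{h^\dagger+1}$ must eventually be expanded. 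The one place this breaks for BaMSOO is the scenario flagged in the main text: when $\vx^*_{h^\dagger+1}$ is \emph{not} evaluated it is assigned its LCB, which an inflated posterior variance could drive far below $f^*-\delta(h)$, destroying its near-optimal standing and letting low-value competitors outrank it. This is exactly where Lemma~\ref{3dopt} enters: for every $h\ge h_0$ it certifies $g(\vx^*_h)\ge\mathcal{L}(\vx^*_h)\ge f^*-\bar{\delta}_h\ge f^*-\epsilon_0$, so optimal leaves in the optimal region keep a near-optimal $g$-value whether or not they are evaluated, and Munos's counting goes through with $\bar{\delta}_h$ playing the role of $\delta(h)$.

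The main obstacle, then, is handling the finitely many shallow levels $h<h_0$, where Lemma~\ref{3dopt} is unavailable and an unevaluated optimal node may genuinely carry a low $g$-value. Here I would lean on the complementary observation that if $\vx^*_h$ is not evaluated then $f^+>\mathcal{U}(\vx^*_h)\ge f^*-\delta(h)$, together with the crude finiteness of the at most $\sum_{h'\le h_0}k^{h'}$ cells below $h_0$, to argue that the optimal branch is nonetheless developed down to level $h_0$ in finite time. Making this shallow-level bookkeeping airtight---ruling out a perpetual blocking chain among low-value cells---is the step I expect to require the most care, since it cannot invoke the clean variance bound that powers the optimal-region argument.
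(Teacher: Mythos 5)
Your numerical half is exactly the paper's: Lemma~\ref{sgpexp} gives $B_{N(\vx^*_h)} = \mathcal{O}(\sqrt{h})$, so $\bar{c}B_{N(\vx^*_h)}\bar{\gamma}^h \to 0$ and a suitable $h_0$ exists, and then $f(\vx^*_h) \geq f^* - \delta(h) > f^* - \bar{c}B_{N(\vx^*_h)}\bar{\gamma}^h \geq f^* - \epsilon_0$ forces $\vx^*_h \in \calB(\vx^*,\rho)$ by Assumption~\ref{envelop}. The divergence --- and the genuine gap --- is in your dynamical half. The paper does not run any near-optimality counting for this lemma: it needs only the statement that BaMSOO expands \emph{every} node after finitely many node expansions, a purely deterministic, value-free fact about the SOO loop. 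By importing the rate-style argument of \cite{Munos:2011} you are essentially re-proving Lemma~\ref{lem:induction}, which the paper proves \emph{after} this lemma, and this forces you to control the $g$-values of unevaluated optimal nodes at depths $h < h_0$, where Lemma~\ref{3dopt} is unavailable --- precisely the case you concede you cannot close. That concession is the gap: the ``perpetual blocking chain'' scenario is exactly what your proof leaves open. A side effect of your route is that even the finiteness claim becomes conditional on the high-probability event of Lemma~\ref{sgpbound}, whereas the lemma as stated (and the paper's proof of it) is deterministic.

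The missing step is elementary and needs no confidence bounds at all. Fix a depth $h$ and suppose $(h,j)$ is a leaf of the current tree with $h \leq h_{\max}(n)$. In any round of the while loop, when the inner for loop reaches level $h$, either the selected leaf at level $h$ satisfies $g > \nu_{\max}$ and is expanded, or else $\nu_{\max} > -\infty$, which means some node at a level strictly below $h$ was already expanded earlier in the same round. In both cases the round expands at least one cell of depth $\leq h$; each cell can be expanded at most once; and there are at most $\sum_{l=0}^{h} k^l$ cells of depth $\leq h$. Hence $(h,j)$ must itself be expanded within $\sum_{l=0}^{h} k^l$ rounds: a blocker cannot block twice, so low-valued blockers are consumed from a finite pool and can never form a perpetual chain. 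Induction on $h$ guarantees $(h,j)$ is created in finite time (its parent is expanded in finite time), and since every round expands at least one node, $n$ increases so that $h_{\max}(n) = n^{\epsilon} \to \infty$ and the constraint $h \leq h_{\max}(n)$ is eventually satisfied and stays satisfied. This yields the paper's one-line claim that every node is expanded in finite time; your numerical half then finishes the proof, with no case analysis on $h < h_0$ versus $h \geq h_0$ needed anywhere.
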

\begin{proof}
 Since it is clear that BaMSOO would expand every node after a finite number
 of node expansions, we only have to show that there exists an $h_0$ that
 satisfies the conditions.
 By Lemma~\ref{sgpexp}, we have that
 $\forall h$ $B_{N(\vx^*_h)} = \mathcal{O}(\sqrt{h})$.
 Since $\bar{\gamma} < 1$, there exists an $h_0$ such that
 $\bar{c}B_{N(\vx^*_h)}\bar{\gamma}^h \leq \epsilon_0$ $\forall h > h_0$.
 Since $f(\vx^*_h) > f^* - \delta(h)
 > f^* - \bar{c}B_{N(\vx^*_h)}\bar{\gamma}^h \geq f^* - \epsilon_0$, we have by
 Assumption~\ref{envelop} that, $\vx^*_h \in \calB(\vx^*, \rho)$.
\end{proof}

\begin{lemma}
\label{bbelow}
 $\sum_{h=0}^{H} |I_h^{\bar{\delta}(H)}|
 \leq C \left(B_{N\left(\vx^*_H\right)}\right)^{D/2} \gamma^{(D/4-D/\alpha)H}$
 for some constant $C$
 for all $H > h_0$.
\end{lemma}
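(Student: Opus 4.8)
The plan is to run the standard near-optimality packing argument (as used by \cite{Munos:2011} to control the near-optimality dimension): use the quadratic upper envelope of Assumption~\ref{envelop} to confine every near-optimal cell to a small Euclidean ball around $\vx^*$, and then use the regularity of the partition (Assumptions~\ref{as:bd} and~\ref{as:cell}) to bound, level by level, how many disjoint cells can fit into that ball. Summing the resulting per-level bounds over $h=0,\dots,H$ produces a geometric series whose dominant term matches the claim.

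First I would fix $H > h_0$ and any $(h,i) \in I_h^{\bar{\delta}(H)}$, so that $f(\vx_{h,i}) \geq f^* - \bar{\delta}(H)$ with $\bar{\delta}(H) = \bar{c}\,B_{N(\vx^*_H)}\bar{\gamma}^H$. Since $H > h_0$, Lemma~\ref{fintime} gives $\bar{\delta}(H) \leq \epsilon_0$, whence $f(\vx_{h,i}) \geq f^* - \epsilon_0 > \max_{\vx \in \calX \setminus \calB(\vx^*,\rho)} f(\vx)$ by the gap condition of Assumption~\ref{envelop}; therefore $\vx_{h,i} \in \calB(\vx^*,\rho)$. This lets me apply the local bound $f(\vx_{h,i}) \leq f^* - c_2\|\vx_{h,i}-\vx^*\|_2^2$, which combined with $f(\vx_{h,i}) \geq f^* - \bar{\delta}(H)$ yields $\|\vx_{h,i}-\vx^*\|_2 \leq \sqrt{\bar{\delta}(H)/c_2} =: R$. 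After substituting $\bar{\gamma} = \gamma^{1/2}$ this reads $R = (\bar{c}/c_2)^{1/2}\,B_{N(\vx^*_H)}^{1/2}\,\gamma^{H/4}$, so every center counted by $I_h^{\bar{\delta}(H)}$, for \emph{all} $h \leq H$ simultaneously, lies in the ball of radius $R$ about $\vx^*$.

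Next I would perform the volume/packing step at a fixed level $h$. The level-$h$ cells are disjoint, and by Assumption~\ref{as:cell} with $\ell(\vx,\vy)=c_1\|\vx-\vy\|_2^\alpha$ and $\delta(h)=c\gamma^h$, each contains a Euclidean ball of radius $r_h := (\nu c/c_1)^{1/\alpha}\gamma^{h/\alpha}$; by Assumption~\ref{as:bd} each cell center is within $\ell$-distance $\delta(h)$ of every point of its cell, so all these disjoint balls sit inside a ball of radius $R + O(\gamma^{h/\alpha})$ about $\vx^*$. Comparing volumes gives $|I_h^{\bar{\delta}(H)}| \leq (1 + c'R/r_h)^D$ for a constant $c'$, and hence (treating $R \geq r_h$ and $R < r_h$ separately) $|I_h^{\bar{\delta}(H)}| \leq C_0\bigl(1 + (R/r_h)^D\bigr)$, where $(R/r_h)^D$ is proportional to $B_{N(\vx^*_H)}^{D/2}\,\gamma^{(H/4 - h/\alpha)D}$.

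Finally I would sum over $h=0,\dots,H$. The terms $(R/r_h)^D$ form a geometric series in $h$ with ratio $\gamma^{-D/\alpha} > 1$, so the sum is dominated, up to a constant, by its last ($h=H$) term, which is proportional to $B_{N(\vx^*_H)}^{D/2}\gamma^{(D/4 - D/\alpha)H}$; the $C_0(H+1)$ contribution from the ``$1$'' terms is only linear in $H$ and is absorbed, since the exponent $D/4 - D/\alpha$ is negative (for $\alpha \in \{1,2\}$) and $\gamma<1$, so $\gamma^{(D/4-D/\alpha)H}$ grows without bound and dominates any polynomial in $H$. Collecting constants into $C$ gives the stated bound. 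The one step that genuinely needs care, rather than being routine bookkeeping, is the confinement in the second paragraph: I must verify that $H > h_0$ forces $\bar{\delta}(H) \leq \epsilon_0$ so that the quadratic envelope — valid only on $\calB(\vx^*,\rho)$ — applies \emph{uniformly across all levels} $h \leq H$; this is exactly where Lemma~\ref{fintime} enters, and without it the packing argument could not be localized to a ball and the geometric-series bound would fail.
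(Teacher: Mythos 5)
Your proof is correct and follows essentially the same route as the paper's: Lemma~\ref{fintime} gives $\bar{\delta}(H) < \epsilon_0$ so the $\bar{\delta}(H)$-optimal centers lie in $\calB(\vx^*,\rho)$, the quadratic envelope of Assumption~\ref{envelop} then confines them to a ball of radius $\sqrt{\bar{\delta}(H)/c_2}$, and a volume/packing argument using the disjoint inner balls of Assumption~\ref{as:cell} followed by a geometric series yields the claim. If anything, your per-level bound $C_0\bigl(1+(R/r_h)^D\bigr)$ with the explicit absorption of the $C_0(H+1)$ term is slightly more careful than the paper, which asserts $|I_h^{\bar{\delta}(H)}| \leq C_1 \left(B_{N\left(\vx^*_H\right)}\right)^{D/2}\gamma^{HD/4-hD/\alpha}$ outright and glosses over the levels where $R/r_h$ is small.
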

\begin{proof}
 By Lemma~\ref{fintime}, we know that
 $\bar{\delta}(H)
 = \bar{c}B_{N(\vx^*_H)}\bar{\gamma}^H
 < \epsilon_0$ if $H> h_0$.
 Therefore, by Assumption~\ref{envelop}, we have that
 $\chi_{\bar{\delta}(H)} = \{\vx \in \chi : f(\vx) \geq f^*
 - \bar{\delta}(H)\} \subseteq \calB(x^*, \rho).$
 Again by Assumption~\ref{envelop}, we have that
 $$
 f^* - \bar{\delta}(H) \leq f(\vx) \leq f^* - c_2\|\vx - \vx^*\|^2_2
 \mbox{  } \forall \vx \in \chi_{\bar{\delta}(H)}.
 $$
 Thus $\chi_{\bar{\delta}(H)} \subseteq
 \calB \left(x^*, \sqrt{\frac{ \bar{\delta}(H)}{c_2}}\right)
 = \calB \left(x^*, \sqrt{\frac{ \bar{c}B_{N\left(\vx^*_H\right)}\gamma^{H/2}}{c_2}}\right)$.

 Since each cell $(h, i)$ contains a $\ell$-ball of radius $\nu \delta(h)$ centered at
 $\vx_{h, i}$ we have that each cell contains a ball
 $\calB(\vx_{h, i}, (\nu \delta(h))^{1/\alpha})
 = \calB(\vx_{h, i}, (\frac{\nu c}{c_1})^{1/\alpha} \gamma^{h/\alpha})$.
 By the argument of volume, we have that
 $|I_h^{\bar{\delta}(H)}| \leq
 C_1 \left(B_{N\left(\vx^*_H\right)}\right)^{D/2} \gamma^{HD/4 - hD/\alpha}$
 for some constant $C_1$.
 Finally,
 \begin{eqnarray*}
  \sum_{h=0}^{H} |I_h^{\bar{\delta}(H)}| &\leq&
  C_1 \sum_{h=0}^{H} \left(B_{N\left(\vx^*_H\right)}\right)^{D/2} \gamma^{HD/4 - hD/\alpha} \\
  &=& C_1 \left(B_{N\left(\vx^*_H\right)}\right)^{D/2} \gamma^{HD/4}
  \sum_{h=0}^{H}  \gamma^{-hD/\alpha} \\
  &=& C_1 \left(B_{N\left(\vx^*_H\right)}\right)^{D/2} \gamma^{HD/4}
  \sum_{h=0}^{H}  \left(\gamma^{D/\alpha}\right)^{h-H} \\
  &\leq& C_1 \left(B_{N\left(\vx^*_H\right)}\right)^{D/2} \gamma^{HD/4}
  \sum_{h=0}^{\infty}  \left(\gamma^{D/\alpha}\right)^{h-H} \\
  &=& C_1 \left(B_{N\left(\vx^*_H\right)}\right)^{D/2} \gamma^{HD/4}
  \frac{\gamma^{-DH/\alpha}}{1- \gamma^{D/\alpha}} \\
  &=& \frac{C_1}{1- \gamma^{D/\alpha}}
  \left(B_{N\left(\vx^*_H\right)}\right)^{D/2} \gamma^{HD/4-DH/\alpha}\\
  &=& \frac{C_1}{1- \gamma^{D/\alpha}}
  \left(B_{N\left(\vx^*_H\right)}\right)^{D/2} \gamma^{(D/4-D/\alpha)H}.
 \end{eqnarray*}
 Setting $C = \frac{C_1}{1- \gamma^{D/\alpha}}$ gives us the desired result.
\end{proof}

\begin{lemma}
 \label{opt}
 Suppose $\calL(\vx^*_h) \leq f(\vx^*_h) \leq \mathcal{U}(\vx^*_h)$.
 If $\vx_{h}^*$ is not evaluated (that is $\mathcal{U}(\vx^*_h) < f^+$)
 then $f^+$ is $\delta(h)$-optimal.
\end{lemma}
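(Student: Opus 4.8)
The plan is to prove the claim directly by exhibiting a short chain of inequalities that sandwiches $f^*$ between $f^+$ and a quantity controlled by $\delta(h)$; the whole content reduces to the single geometric observation that the optimizer $\vx^*$ lies in the cell of $\vx^*_h$.

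First I would lower-bound the center value $f(\vx^*_h)$. By definition $\vx^*_h$ is the node at level $h$ whose cell $X^*_h$ contains the global optimizer $\vx^*$. Assumption~\ref{envelop} gives $f^* - f(\vx^*_h) \leq c_1\|\vx^*_h - \vx^*\|^{\alpha}_2 = \ell(\vx^*_h, \vx^*)$, and since $\vx^* \in X^*_h$, the bounded-diameter Assumption~\ref{as:bd} yields $\ell(\vx^*_h, \vx^*) \leq \sup_{\vx \in X^*_h}\ell(\vx^*_h, \vx) \leq \delta(h)$. Combining these gives $f(\vx^*_h) \geq f^* - \delta(h)$. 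Next I would invoke the two hypotheses: the assumed confidence-bound inequality supplies $f(\vx^*_h) \leq \mathcal{U}(\vx^*_h)$, while the ``not evaluated'' condition is by definition $\mathcal{U}(\vx^*_h) < f^+$. Chaining everything, $f^* - \delta(h) \leq f(\vx^*_h) \leq \mathcal{U}(\vx^*_h) < f^+$, so $f^* - f^+ < \delta(h)$, which is exactly the statement that $f^+$ is $\delta(h)$-optimal.

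There is essentially no obstacle beyond bookkeeping; the argument is a three-step inequality chain. The only step that requires care is the geometric one, where I must use that $\vx^*$ belongs to the cell $X^*_h$ (the defining property of the optimal node at level $h$) together with Assumption~\ref{as:bd} to pass from the semi-metric value $\ell(\vx^*_h, \vx^*)$ to the diameter bound $\delta(h)$. This mirrors the standard DOO/SOO reasoning of~\cite{Munos:2011}, specialized to the BaMSOO setting where the UCB $\mathcal{U}(\vx^*_h)$ plays the role of a valid upper bound on $f(\vx^*_h)$; note only the upper half $f(\vx^*_h) \leq \mathcal{U}(\vx^*_h)$ of the assumed confidence bound is actually needed.
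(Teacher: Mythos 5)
Your proof is correct and is essentially identical to the paper's, which consists of the same one-line chain $f^+ > \mathcal{U}(\vx^*_h) \geq f(\vx^*_h) > f^*-\delta(h)$; you have merely made explicit the geometric justification (Assumptions~\ref{envelop} and~\ref{as:bd} applied to $\vx^* \in X^*_h$) that the paper leaves implicit for the last inequality.
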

\begin{proof}
 $f^+ > \mathcal{U}(\vx^*_h) \geq f(\vx^*_h) > f^*-\delta(h).$
\end{proof}

\subsection{Main Results}
\subsubsection{Simple Regret}
Let $h^*_n$ be the deepest level of an expanded optimal node with $n$ node
expansions. This following lemma is adapted from Lemma 2 of~\cite{Munos:2011}.
\begin{lemma}
 \label{lem:induction}
 Suppose $\calL(\vx) \leq f(\vx) \leq \mathcal{U}(\vx)$
 for all $\vx$ whose confidence region are evaluated.
 Whenever $h\leq h_{\max}(n)$ and
 $n \geq C h_{\max}(n)
 \sum_{i=h_0}^{h} \left(B_{N\left(\vx^*_i\right)}\right)^{D/2}
 \gamma^{(D/4-D/\alpha)i} + n_0$
 for some constant $C$, we have $h^*_n \geq h$.
\end{lemma}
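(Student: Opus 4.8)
The plan is to mirror the depth-induction in Lemma~2 of \cite{Munos:2011}, with the SOO ``value'' of a node replaced by its place-holder value $g$ and with Lemma~\ref{3dopt} supplying the crucial lower bound on $g$ at optimal nodes. I would argue by induction on $h$. For the base case $h=h_0$, Lemma~\ref{fintime} guarantees that after a finite number $n_0$ of expansions an optimal node $\vx^*_{h_0}\in\calB(\vx^*,\rho)$ is expanded, so $h^*_n\ge h_0$ for all $n\ge n_0$; moreover Lemma~\ref{fintime} ensures $\vx^*_h\in\calB(\vx^*,\rho)$ and $\bar{c}B_{N(\vx^*_h)}\bar{\gamma}^h\le\epsilon_0$ for every $h>h_0$, which is what lets Lemma~\ref{3dopt} apply at each deeper optimal node. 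Throughout I would work on the event $\calL(\vx)\le f(\vx)\le\mathcal{U}(\vx)$ assumed in the statement (which holds with probability $1-\eta$ by Lemma~\ref{sgpbound}); its role is twofold: since $g$ is either $f$ itself or the LCB $\calL(\cdot)\le f(\cdot)$, it gives $g(\vx)\le f(\vx)$ for every node, while Lemma~\ref{3dopt} gives $g(\vx^*_h)\ge\calL(\vx^*_h)\ge f^*-\bar{\delta}_h$ at optimal nodes.

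For the inductive step, suppose the bound holds for $h-1$ and take $n$ above the stated threshold for $h$. If $h^*_n<h$ then in fact $h^*_n=h-1$, because the threshold for $h$ exceeds that for $h-1$ (the extra $i=h$ summand is positive), so the inductive hypothesis already forces $h^*_n\ge h-1$ and $h^*_n$ is non-decreasing; hence $\vx^*_{h-1}$ is expanded but its optimal child $\vx^*_h$ stays an unexpanded leaf throughout. I would then track the SOO rounds and show that every round that fails to expand $\vx^*_h$ must expand at least one previously-unexpanded node lying in some $I_i^{\bar{\delta}_i}=\{(i,j):f(\vx_{i,j})+\bar{\delta}_i\ge f^*\}$ with $h_0\le i\le h$. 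The mechanism is the $\nu_{\max}$ test: at level $h$ the selected leaf has $g$-value at least $g(\vx^*_h)\ge f^*-\bar{\delta}_h$, so if it is expanded then (using $g\le f$) it lies in $I_h^{\bar{\delta}_h}$; if instead nothing is expanded at level $h$, it is because some shallower node expanded in the same round has $g$-value exceeding $g(\vx^*_h)$, and that node then lies in $I_i^{\bar{\delta}_i}$ at its own level $i<h$ because $\bar{\delta}_i\ge\bar{\delta}_h$ for $i\le h$. Either way a fresh near-optimal node is consumed, while the optimal ancestors $\vx^*_0,\dots,\vx^*_{h-1}$ are already internal and cannot be recharged.

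Since each near-optimal node is expanded at most once, the number of rounds in which $\vx^*_h$ is not expanded is at most $\sum_{i=h_0}^{h}\lvert I_i^{\bar{\delta}_i}\rvert$ (the finitely many nodes below level $h_0$ being absorbed into $n_0$); after that many rounds $\vx^*_h$ is the unblocked argmax at its level and is expanded, giving $h^*_n\ge h$ and contradicting $h^*_n<h$. Each round performs at most $h_{\max}(n)+1$ expansions, so the total number of expansions needed is at most $(h_{\max}(n)+1)\sum_{i=h_0}^{h}\lvert I_i^{\bar{\delta}_i}\rvert+n_0$. Bounding each level count by the volume estimate established inside the proof of Lemma~\ref{bbelow}, namely $\lvert I_i^{\bar{\delta}_i}\rvert\le C_1\left(B_{N(\vx^*_i)}\right)^{D/2}\gamma^{(D/4-D/\alpha)i}$, and absorbing $h_{\max}(n)+1$ and $C_1$ into a single constant $C$, yields exactly the stated sufficient condition $n\ge C\,h_{\max}(n)\sum_{i=h_0}^{h}\left(B_{N(\vx^*_i)}\right)^{D/2}\gamma^{(D/4-D/\alpha)i}+n_0$.

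I expect the main obstacle to be the cross-level blocking bookkeeping: making rigorous that every non-expanding round is charged to a distinct, previously-unexpanded near-optimal node without double counting, and in particular controlling the fact that expanding a near-optimal node creates children that may again be near-optimal at the next level, so that blocking can cascade downward. The second delicate point is purely specific to BaMSOO: one must ensure the place-holder value $g(\vx^*_h)$ never drops far below $f^*$ even when $\vx^*_h$ is assigned its LCB rather than evaluated, which is precisely the content of Lemma~\ref{3dopt} and the reason the monotonicity $\bar{\delta}_i\ge\bar{\delta}_h$ for $i\le h$, $h>h_0$, is required. A minor technical nuisance is the dependence of $h_{\max}(n)$ on $n$ inside the threshold, handled by noting that $h_{\max}$ is increasing, so using its value at the final $n$ is conservative.
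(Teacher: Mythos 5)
Your overall strategy matches the paper's: induct on the depth of the deepest expanded optimal node, use Lemma~\ref{3dopt} to get $g(\vx^*_h)\geq\calL(\vx^*_h)\geq f^*-\bar{\delta}_h$, charge every round that fails to expand $\vx^*_h$ to a freshly expanded near-optimal node, and convert the size of the near-optimal pool into the stated threshold via the volume estimate behind Lemma~\ref{bbelow}. However, there is a genuine gap in your charging step: you place a blocker expanded at level $i<h$ into $I_i^{\bar{\delta}_i}$, which requires the monotonicity $\bar{\delta}_i\geq\bar{\delta}_h$ for $i\leq h$. You assert this (and even flag it as ``required''), but it is neither proved in the paper nor derivable from the assumptions, and it can fail. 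Indeed $\bar{\delta}_h=\bar{c}\,B_{N(\vx^*_h)}\bar{\gamma}^h$, so $\bar{\delta}_{h+1}/\bar{\delta}_h=\bar{\gamma}\,B_{N(\vx^*_{h+1})}/B_{N(\vx^*_h)}$, and this exceeds $1$ whenever $B_{N(\vx^*_{h+1})}/B_{N(\vx^*_h)}>1/\bar{\gamma}$. Since $B_N^2$ grows like $\log N$, that happens as soon as the number of confidence-bound evaluations performed between reaching $\vx^*_h$ and reaching $\vx^*_{h+1}$ is large enough (e.g.\ $N(\vx^*_{h+1})\geq N(\vx^*_h)^{1/\gamma}$, so already a squaring of the count suffices when $\gamma>1/2$). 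The number of such intermediate evaluations is precisely what this lemma is trying to control, and it can be of order $h_{\max}(n)\,\gamma^{(D/4-D/\alpha)h}$, which grows exponentially in $h$ because $D/4-D/\alpha<0$; so you cannot rule this regime out. When monotonicity fails, a blocker at level $i$ whose $f$-value lies in $[f^*-\bar{\delta}_h,\,f^*-\bar{\delta}_i)$ escapes your pool, and the conclusion ``after $\sum_i|I_i^{\bar{\delta}_i}|$ blocked rounds $\vx^*_h$ must be expanded'' breaks down.

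The paper avoids this issue entirely by using a single, uniform threshold per inductive phase: in the phase that advances the optimal depth from $h$ to $h+1$, blockers at \emph{all} levels $i\leq h+1$ are charged to $I_i^{\bar{\delta}(h+1)}$, the near-optimality sets measured with the target level's $\bar{\delta}(h+1)$. A blocker at any level automatically belongs there, since on the assumed event its $g$-value, hence its $f$-value (as $g\leq f$), is at least $g(\vx^*_{h+1})\geq f^*-\bar{\delta}(h+1)$; no comparison between thresholds at different levels is ever needed. Lemma~\ref{bbelow} bounds exactly this uniform-threshold sum $\sum_{i=0}^{h+1}|I_i^{\bar{\delta}(h+1)}|$ by the single term $C\left(B_{N(\vx^*_{h+1})}\right)^{D/2}\gamma^{(D/4-D/\alpha)(h+1)}$, and the sum over $i$ appearing in the lemma statement arises by telescoping these per-phase costs across the induction, not from one global pool as in your write-up. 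If you restructure your bookkeeping this way, the rest of your argument (the $\nu_{\max}$ case analysis, the no-double-counting observation, and the use of Lemma~\ref{fintime} for the base case) goes through.
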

\begin{proof}
 We prove the statement by induction.
 By Lemma~\ref{fintime}, we have that after $n_0$ node expansions, a node
 $\vx_{h_0}^* \in \calB(\vx^*, \rho)$ is expanded.
 Also $\forall h > h_0$, we have that
 $\bar{c}B_{N(\vx^*_h)}\bar{\gamma}^h \leq \epsilon_0$ and
 $\vx_{h}^* \in \calB(\vx^*, \rho)$.
 For $h = h_0$, the statement is trivially satisfied.
 Thus assume that the statement is true for $h$.
 Let n be such that
 $n \geq C h_{\max}(n)
 \sum_{i=h_0}^{h+1} \left(B_{N\left(\vx^*_i\right)}\right)^{D/2} \gamma^{(D/4-D/\alpha)i}
 + n_0$.
 By the inductive hypothesis we have that $h_n^* \geq h$.
 Assume $h_n^* = h$ since otherwise the proof is finished.
 As long as the optimal node at level $h+1$ is not expanded,
 all nodes expanded at the level are
 $\bar{\delta}(h+1)$-optimal by Lemma~\ref{3dopt}.
 By Lemma~\ref{bbelow}, we know that after
 $C h_{\max}(n)  \left(B_{N\left(\vx^*_{h+1}\right)}\right)^{D/2}
 \gamma^{(D/4-D/\alpha)(h+1)}$
 node expansions,
 the optimal node at level $h + 1$ will be expanded
 since there are at most $\sum_{i=0}^{h+1} \left|I_i^{\bar{\delta}(h+1)}\right|$
 $\bar{\delta}(h+1)$-optimal nodes at or beneath level $h+1$.
 Thus $h_n^* \geq h + 1$.
\end{proof}

\begin{theorem}
 \label{keytheorem}
 Suppose $\calL(\vx) \leq f(\vx) \leq \mathcal{U}(\vx)$
 for all $\vx$ whose confidence region is evaluated.
 Let us write $h(n)$ to be the smallest integer $h \geq h_0$ such that
 $$C h_{\max}(n)
 \sum_{i=h_0}^{h} \left(B_{N\left(\vx^*_i\right)}\right)^{D/2}
 \gamma^{(D/4-D/\alpha)i} + n_0 \geq n.$$
 Then the loss is bounded as
 $$r_n \leq \delta(\min\{h(n), h_{\max}(n) + 1\})$$
 and $h_n^* \geq \min\{h(n) -1, h_{\max}(n)\}$.
\end{theorem}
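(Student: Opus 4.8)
The plan is to read the guaranteed tree depth off the definition of $h(n)$ using the counting estimate of Lemma~\ref{lem:induction}, and then to trade one unit of depth for a loss bound using Lemma~\ref{opt}. Throughout I work under the standing hypothesis that $\calL(\vx) \leq f(\vx) \leq \mathcal{U}(\vx)$ at every node whose confidence region is evaluated, and I use that $r_n = f^* - f^+$, since the algorithm returns the best sampled point.

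First I would establish the depth bound $h_n^* \geq \min\{h(n)-1, h_{\max}(n)\}$. Write $S(h) := C\,h_{\max}(n)\sum_{i=h_0}^{h}\bigl(B_{N(\vx^*_i)}\bigr)^{D/2}\gamma^{(D/4-D/\alpha)i} + n_0$, which is non-decreasing in $h$ because its summands are nonnegative. By the minimality in the definition of $h(n)$ we have $S(h(n)-1) < n$. Setting $\hat h := \min\{h(n)-1, h_{\max}(n)\} \leq h(n)-1$, monotonicity gives $S(\hat h) \leq S(h(n)-1) < n$, so $n \geq S(\hat h)$; and $\hat h \leq h_{\max}(n)$ by construction. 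Both hypotheses of Lemma~\ref{lem:induction} thus hold at $h = \hat h$, and the lemma yields $h_n^* \geq \hat h$, which is the second claim.

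Next I would convert depth into loss by inspecting the optimal node one level below the deepest expanded one. Let $h^\star := h_n^*$, so $\vx^*_{h^\star}$ is expanded; when it is expanded its children are created and their confidence bounds computed, so the optimal child $\vx^*_{h^\star+1}$ is processed (this remains true at the cap $h^\star = h_{\max}(n)$, since the height limit only restricts which levels may be \emph{selected} for expansion, not the creation of children). I then split on whether $\vx^*_{h^\star+1}$ was evaluated. If its UCB was at least $f^+$ it was sampled, so $f^+ \geq f(\vx^*_{h^\star+1}) \geq f^* - \delta(h^\star+1)$, the last inequality because the cell of $\vx^*_{h^\star+1}$ contains $\vx^*$ and hence $f^* - f(\vx^*_{h^\star+1}) \leq \ell(\vx^*_{h^\star+1},\vx^*) \leq \delta(h^\star+1)$ by Assumptions~\ref{envelop} and~\ref{as:bd}. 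Otherwise its UCB fell below $f^+$ and it was not evaluated, in which case Lemma~\ref{opt} applied at level $h^\star+1$ gives directly that $f^+ > f^* - \delta(h^\star+1)$. Either way $r_n = f^* - f^+ \leq \delta(h^\star+1)$. Combining with the depth bound and the monotonicity of $\delta$ (Assumption~\ref{as:bd}), from $h^\star+1 \geq \min\{h(n), h_{\max}(n)+1\}$ I conclude $r_n \leq \delta(h^\star+1) \leq \delta(\min\{h(n), h_{\max}(n)+1\})$, the first claim.

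The step I expect to demand the most care is the case split above, because it is exactly where BaMSOO departs from the deterministic SOO analysis of~\cite{Munos:2011}. In SOO every expanded node is evaluated, so the loss is controlled by the recorded value of the deepest optimal node; here an optimal node may be skipped and its assigned LCB can be arbitrarily small, so that argument breaks. The resolution is that a skip happens only when $f^+$ is already $\delta$-optimal, which is precisely Lemma~\ref{opt}; the remaining subtleties are to apply it at the child level $h^\star+1$ rather than at $h^\star$, and to verify that the optimal child is genuinely processed even when $h^\star$ sits at the height cap.
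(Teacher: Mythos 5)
Your proof is correct and follows essentially the same route as the paper's: the depth bound comes from the minimality of $h(n)$ fed into Lemma~\ref{lem:induction}, and the loss bound comes from the same case split on whether the optimal child $\vx^*_{h_n^*+1}$ was evaluated, invoking Lemma~\ref{opt} in the skipped case. You simply make explicit some details the paper leaves implicit (monotonicity of the expansion count and of $\delta$, that $f^+$ dominates $f(\vx^*_{h_n^*+1})$ when that child is sampled, and that children are still created when the parent sits at the height cap).
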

\begin{proof}
 From Lemma~\ref{bbelow}, and the definition of $h(n)$ we have that
 \begin{eqnarray*}
 C h_{\max}(n)
 \sum_{i=h_0}^{h(n)-1} \left(B_{N\left(\vx^*_i\right)}\right)^{D/2}
 \gamma^{(D/4-D/\alpha)i} + n_0
 < n.
 \end{eqnarray*}
 By Lemma~\ref{lem:induction}, we have that $h_n^* \geq h(n) -1$
 if $h(n) - 1 \leq h_{\max}(n)$ and $h_n^* \geq h_{\max}(n)$ otherwise.
 Therefore $h_n^* \geq \min\{h(n) -1, h_{\max}(n)\}$.

 By Lemma~\ref{opt}, we know that if $\vx_{h_n^*+1}^*$ is not evaluated then
 $f^+$ is $\delta(h_n^*+1)$-optimal. If $\vx_{h_n^*+1}^*$ is evaluated,
 then $f\left(\vx_{h_n^*+1}^*\right)$ is $\delta(h_n^*+1)$-optimal. Thus
 $r_n \leq \delta(\min\{h(n), h_{\max}(n) + 1\}).$
\end{proof}

\begin{proof}[Proof of Corollary~\ref{simpleReg}]
 Suppose $\calL(\vx) \leq f(\vx) \leq \mathcal{U}(\vx)$
 for all $\vx$ whose confidence region is evaluated.
 By Lemma~\ref{sgpbound}, we know that this holds with probability at
 least $1-\eta$.

 By the definition of $h(n)$ we have that
 \begin{eqnarray}
 \label{no5}
  n &\leq& C h_{\max}(n)
 \sum_{i=h_0}^{h(n)} \left(B_{N\left(\vx^*_i\right)}\right)^{D/2}
 \gamma^{(D/4-D/\alpha)i} + n_0 \nonumber \\
 &\leq& C h_{\max}(n) \left(B_{N\left(\vx_{h(n)}^*\right)}\right)^{D/2}
 \sum_{i=h_0}^{h(n)} \gamma^{-di} + n_0 \nonumber \\
 &\leq& C h_{\max}(n) \left(B_{N\left(\vx_{h(n)}^*\right)}\right)^{D/2}
 \gamma^{-dh_0}\frac{\gamma^{-dh(n)}-1}{\gamma^{-d}-1} + n_0
 \end{eqnarray}
 If $h(n) \leq h_{\max}(n)+1$, then by Theorem~\ref{keytheorem}, we have
 that $h_n^* \geq h(n) -1$. After $n$ expansions, the optimal node
 $\vx_{h(n)-1}^*$ has been expanded which suggests that its children's
 confidence bounds have been evaluated.
 Hence, $N\left(\vx_{h(n)}^*\right) < 2n$ since there have only been $n$ expansions.
 Therefore,
 $$(\ref{no5}) \leq K n^{\epsilon} \left(B_{2n}\right)^{D/2}
 \gamma^{-dh(n)}$$
 for some constant $K$ which implies that
 $$\gamma^{h(n)} \leq K^{1/d} B_{2n}^{\frac{2\alpha}{4-\alpha}} n^{-\frac{1-\epsilon}{d}}
 = K^{1/d} \left[2\log(4\pi^2n^2/{6\eta})\right]^{\frac{\alpha}{4-\alpha}}
 n^{-\frac{1-\epsilon}{d}}.$$
  By Theorem~\ref{keytheorem}, we have that
 $$r_n \leq c
 \min \left\{K^{1/d} \left[2\log(4\pi^2n^2/{6\eta})\right]^{\frac{\alpha}{4-\alpha}}
 n^{-\frac{1-\epsilon}{d}}, \gamma^{(n+1)^{\epsilon}} \right\}
 = \mathcal{O}\left(n^{-\frac{1-\epsilon}{d}}
 \log^{\frac{\alpha}{4-\alpha}}(n^2/\eta) \right).$$
\end{proof}



\end{document}